\def\eqref#1{equation~\ref{#1}}
\def\1{\bm{1}}
\DeclareMathAlphabet{\mathsfit}{\encodingdefault}{\sfdefault}{m}{sl}
\SetMathAlphabet{\mathsfit}{bold}{\encodingdefault}{\sfdefault}{bx}{n}
\newtheorem{assumption}{Assumption}
\newtheorem{theorem}{Theorem}
\newtheorem{lemma}{Lemma}
\definecolor{LightRed}{RGB}{255,182,193} 
\definecolor{LightBlue}{RGB}{173, 216, 230}
\colorlet{shadecolor}{LightBlue}
\colorlet{shadecolor}{LightRed}
\pgfplotsset{compat=1.18}
\title{FedMuon: Accelerating Federated Learning with Matrix Orthogonalization}
\author{Junkang Liu$^1$, Fanhua Shang$^1$, Junchao Zhou$^1$, Hongying Liu$^1$, Yuanyuan Liu$^2$, Jin Liu$^2$  \\
$^1$Tianjin University, $^2$Xidian University, \\
}
\begin{document}

\maketitle

\begin{abstract}
The core bottleneck of Federated Learning (FL) lies in the communication rounds. That is, how to achieve more effective local updates is crucial for reducing communication rounds. Existing FL methods still primarily use element-wise  local optimizers (Adam/SGD), neglecting the geometric structure of the weight matrices. This often leads to the amplification of pathological directions in the weights during local updates, leading deterioration in the condition number and slow convergence. Therefore, we introduce the Muon optimizer in
local (named \texttt{Local Muon}), which has matrix orthogonalization to optimize matrix-structured parameters. 
Experimental results show that, in IID setting, \texttt{Local Muon} significantly accelerates the convergence of FL and reduces communication rounds compared to Local SGD and Local AdamW. However, in non-IID setting, independent matrix orthogonalization based on the local distributions of each client induces strong client drift. Applying Muon in non-IID FL poses significant challenges: (1) client preconditioner leading to client drift; (2) moment reinitialization.  To address these challenges, we propose a novel  \underline{Fed}erated \underline{Muon} optimizer (\texttt{FedMuon}), which incorporates two key techniques: (1) momentum aggregation, where clients use the aggregated momentum for local initialization; (2) local-global alignment, where the local gradients are aligned with the global update direction to significantly reduce client drift. Theoretically, we prove that \texttt{FedMuon} achieves a linear speedup convergence rate of $\mathcal{O}(\sqrt{(L \Delta \sigma_l^2)/(S K R)}+(L \Delta)/R)$ without the heterogeneity assumption, where $S$ is the number of participating clients per round, $K$ is the number of local iterations, and $R$ is the total number of communication rounds. 
 Empirically, we validate the effectiveness of \texttt{FedMuon} on language and vision models. Compared to several baselines, \texttt{FedMuon} significantly reduces communication rounds and improves test accuracy. The code is available in 
 \url{ https://github.com/junkangLiu0/FedMuon}
\end{abstract}

\section{Introduction}

With the rapid growth of data and rising concerns over user privacy, traditional centralized training paradigms have become inadequate. Federated Learning (FL) \cite{mcmahan2017communication} offers a scalable and privacy-preserving framework that enables collaborative model training across decentralized clients without sharing raw data~\citep{liu2024fedbcgd}. As data becomes increasingly siloed, FL is a practical solution for large-scale distributed deep learning.
However, data heterogeneity and limited communication rounds create significant bottlenecks in FL. Recent studies reveal that the Hessian matrix in neural networks exhibits an approximate block-diagonal structure with several dense sub-blocks~\citep{collobert2004large,zhang2024adam}, as shown in Figure~\ref{babygpt_hessian_plot}. Understanding parameter matrix structures is crucial for effective federated aggregation, yet this perspective has been largely overlooked in the federated learning literature.
 Currently, when clients use element-wise optimizers (such as AdamW/SGD) for multi-step updates on their local data, the weight matrices may gradually become ill-conditioned (see  Figure \ref{fig:comlare}), causing the update directions to either cancel out or amplify after aggregation. As a result, in each communication round clients struggle to obtain effective updates, and the global model converges slowly.

Recent advancements in the Muon optimizer offer a novel solution to this challenge. The Muon optimizer \citep{jordan6muon} has recently demonstrated that orthogonal normalization of weight update matrices can significantly accelerate neural network training (see  Figure \ref{fig:SVD}). By conditioning the weight updates to produce consistent changes in the hidden states, orthogonal normalization updates lead to faster convergence, improved training stability, and better hyperparameter transferability across different model scales \citep{bernstein2024modular,large2024scalable,pethick2025training}. Moonshot AI \citep{liu2025muon} found that, when training a 16B model, Muon achieved nearly twice the computational efficiency compared to AdamW ~\citep{loshchilov2017fixing}. Similarly, Essential AI \citep{shah2025practical} observed significant improvements with Muon in large-batch training.  Both GLM~4.5 and K2 are trained with the Muon optimizer \citep{liu2025muon}.
These features suggest that using Muon for local training in FL (\texttt{Local Muon}) could accelerate local training and reduce communication rounds.

We have also validated the effectiveness of \texttt{Local Muon} in FL in IID setting. \texttt{Local Muon} significantly outperforms Local SGD  and Local AdamW  (see  Figure \ref{fig:iid}). \texttt{Local Muon} accelerates local convergence and reduces the number of communication rounds required to reach the same level of precision, with faster local loss decrease, smoother training curves, and faster global model convergence (see  Figure \ref{fig:iid}). 
\textit{However, in non-IID setting, although the local losses of each client still decrease rapidly, the global model after aggregation becomes significantly unstable or even fails to converge (see  Figure \ref{fig:iid}).} We identify the reasons why the Muon optimizer fails in the case of non-IID federated learning from two complementary perspectives.
\vspace{-1mm}
{\colorlet{shadecolor}{LightBlue}
\begin{snugshade}
	\begin{center}
		{\bf(Challenge 1)} \textit{\textbf{Client preconditioner leading to client drift}: In non-IID FL, Muon’s client-specific preconditioner scales gradients from local data distribution, causing misalignment in aggregation.}
	\end{center}
	\vspace{-0.2cm}
\end{snugshade}}
\vspace{-2mm}
{\colorlet{shadecolor}{LightRed}
\begin{snugshade}
	\begin{center}
		{\bf(Challenge 2)} \textit{\textbf{Moment reinitialization:} reinitializing the moment of Muon from scratch in every round hinders the convergence.}
	\end{center}
	\vspace{-0.2cm}
\end{snugshade}}
\vspace{-2mm}
These challenges motivate us to develop a novel \textbf{\underline{Fed}erated \underline{Muon}} optimizer, \texttt{FedMuon}, the first FL optimizer that explicitly accounts for the structure of update matrices. \texttt{FedMuon} addresses the impact of non-IID data through two key mechanisms:  (1) \textbf{local-global alignment}, where the current local gradients are aligned with the global update  to significantly reduce cross-client inconsistency; (2) \textbf{momentum aggregation}, where clients initialize using the aggregated momentum.

Theoretically, we prove that \texttt{FedMuon} achieves a linear speedup convergence rate of $\mathcal{O}(\sqrt{(L \Delta \sigma_l^2)/(S K R)}+(L \Delta)/R)$ without the heterogeneity assumption, where $S$ is the number of participating clients per round, $K$ is the number of local iterations, and $R$ is the total number of communication rounds. Due to  the local-global alignment, our convergence speed is unaffected by data heterogeneity. Empirical results on ViT ~\citep{dosovitskiy2020image} and LLMs ~\citep{liu2019roberta} confirm that \texttt{FedMuon} improves test accuracy and reduces communication overhead compared to strong FL baselines.

\begin{figure}[tb]
	\centering    
	\begin{subfigure}[b]{0.16\textwidth}
		\includegraphics[width=\textwidth]{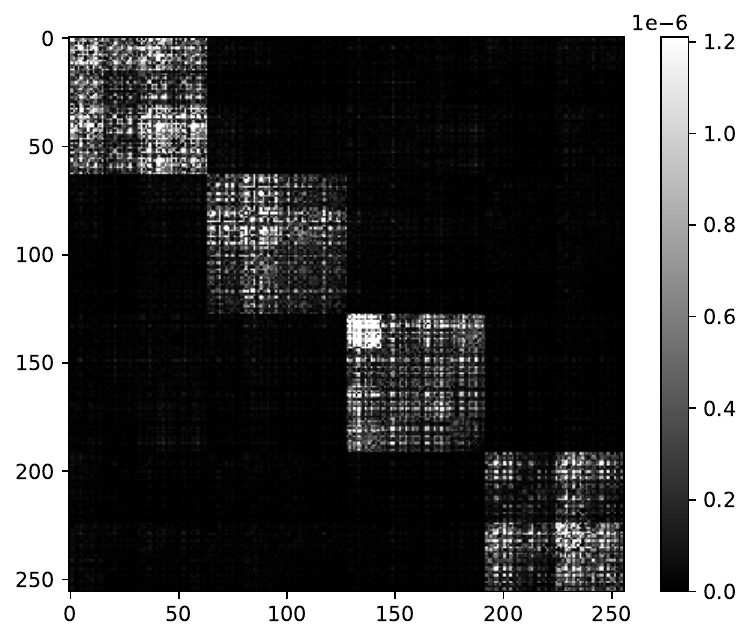}
		\caption{\texttt{query} }
	\end{subfigure}
	\begin{subfigure}[b]{0.155\textwidth}
		\includegraphics[width=\textwidth]{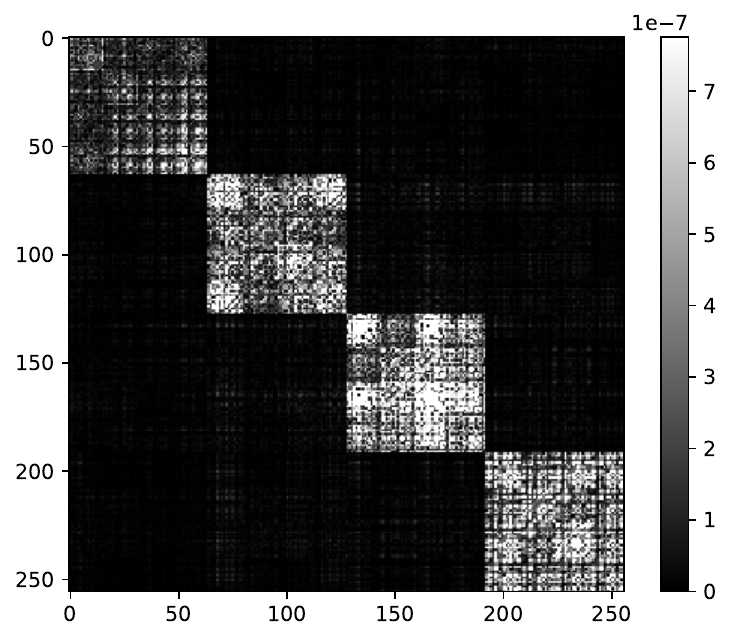}
		\caption{\texttt{key} }
	\end{subfigure}
	\begin{subfigure}[b]{0.16\textwidth}
		\includegraphics[width=\textwidth]{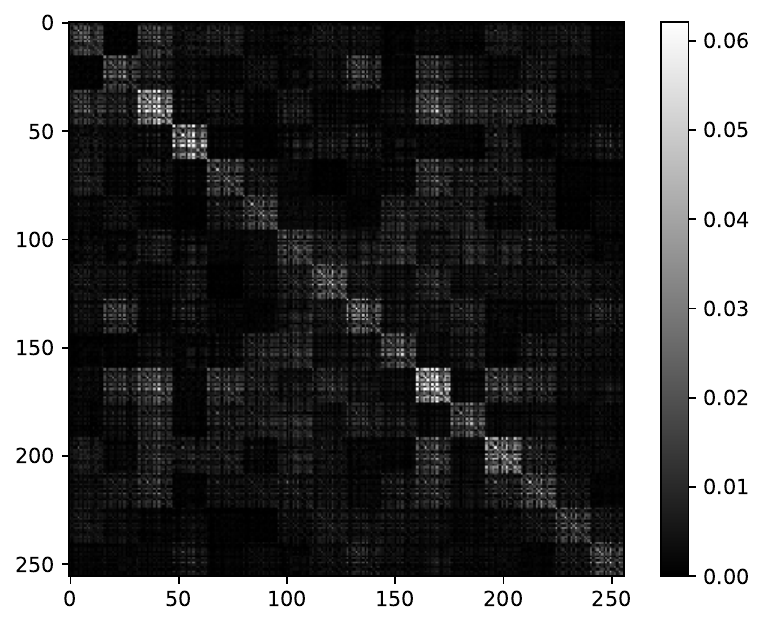}
		\caption{\texttt{value} }
	\end{subfigure}
	\begin{subfigure}[b]{0.16\textwidth}
		\includegraphics[width=\textwidth]{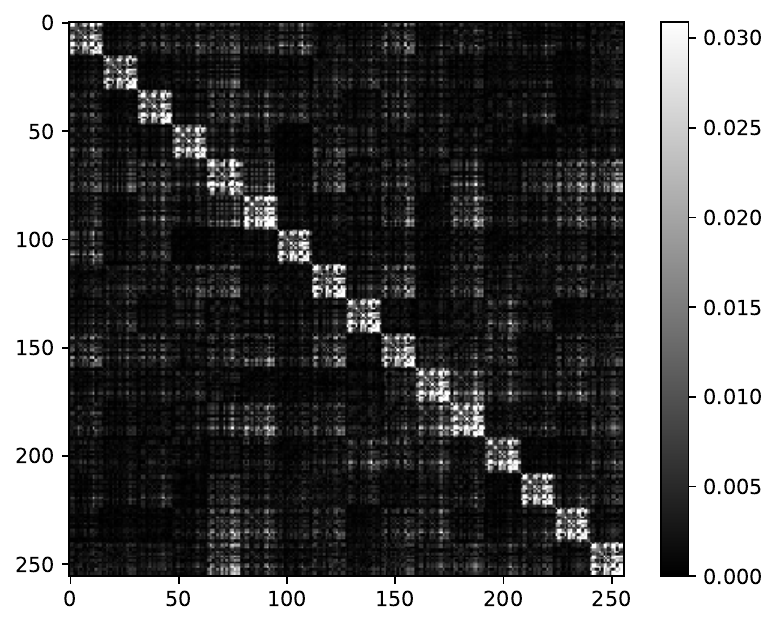}
		\caption{\texttt{attn.proj} }
	\end{subfigure}
	\begin{subfigure}[b]{0.16\textwidth}
		\includegraphics[width=\textwidth]{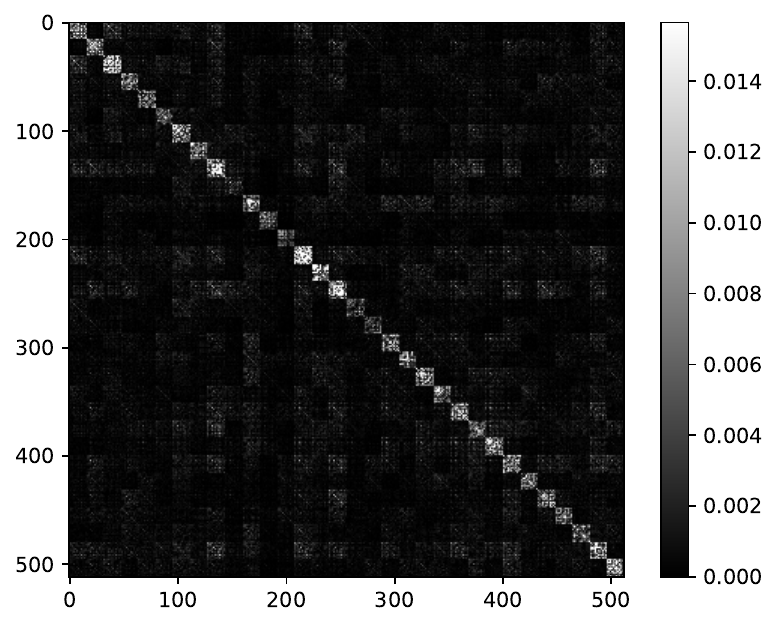}
		\caption{\texttt{mlp.fc\_1} }
	\end{subfigure}
	\begin{subfigure}[b]{0.17\textwidth}
		\includegraphics[width=\textwidth]{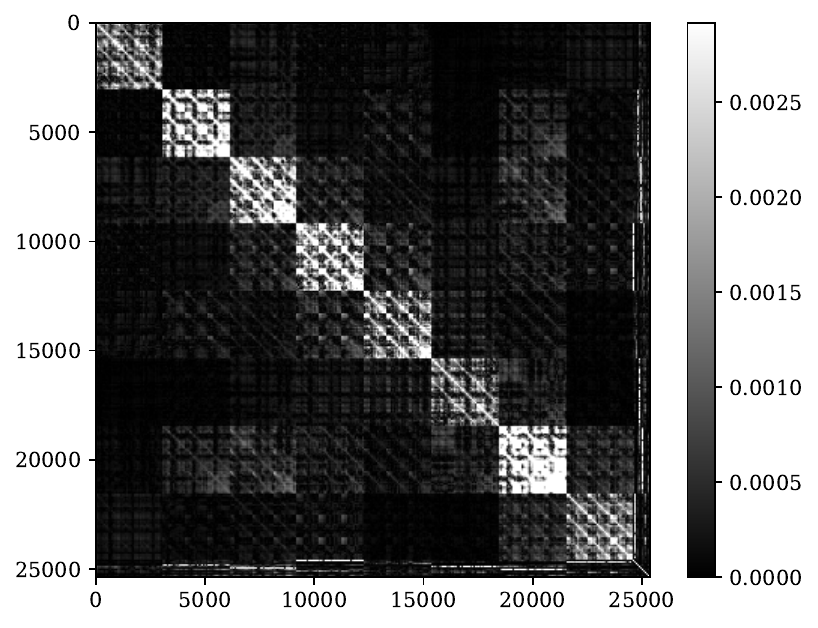}
		\caption{\texttt{MLP} }
	\end{subfigure}
	\vspace{-2mm}
	\caption{\small (a–f):Block-wise Hessian structure of Transformer parameters and MLP ~\citep{zhang2024adam}.}
	\label{babygpt_hessian_plot}
\end{figure}

\begin{figure*}[tb]
	\centering
	
	\begin{subfigure}[b]{0.47\textwidth}
		\centering
		\includegraphics[width=\linewidth]{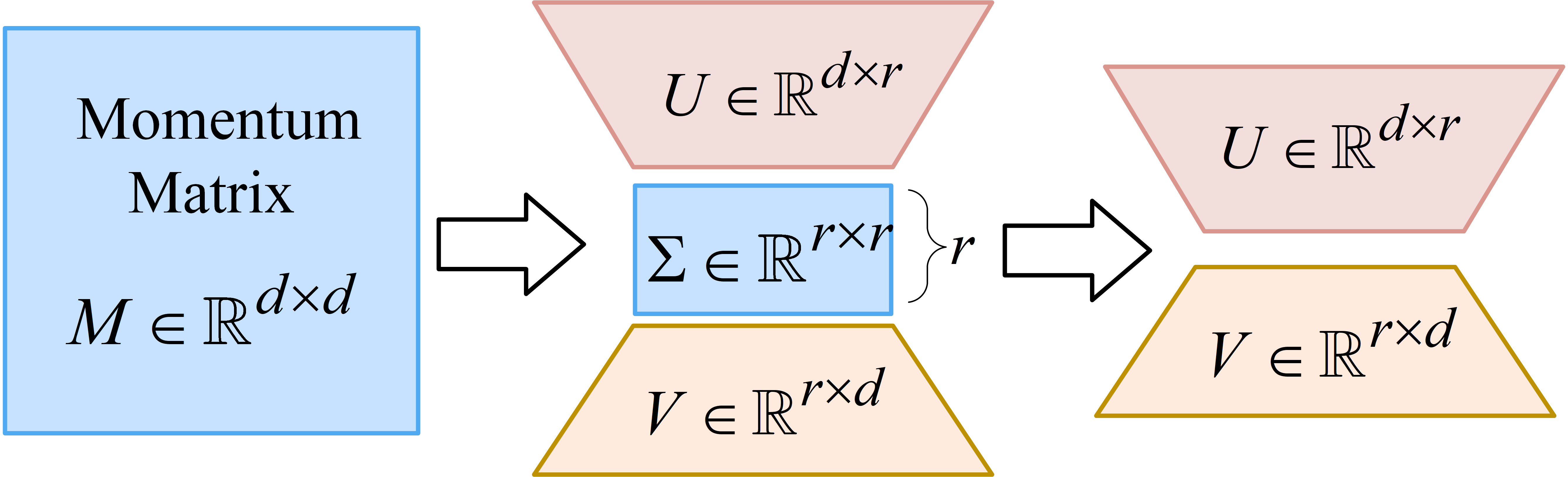}
		\caption{Matrix orthogonalizatio with SVD}
		\label{fig:sgd_grid_search:vit}
	\end{subfigure}
	\hfill
	\begin{subfigure}[b]{0.47\textwidth}
		\centering
		\includegraphics[width=\linewidth]{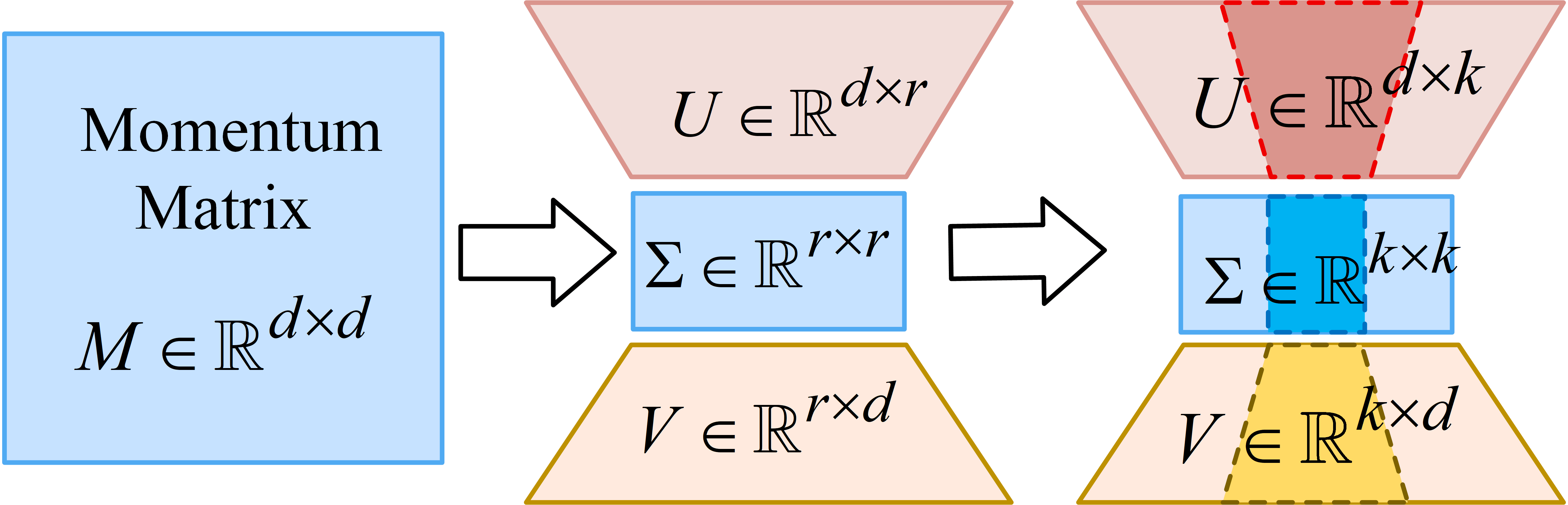}
		\caption{Matrix compression using SVD}
		\label{fig:sgd_grid_search:bert}
	\end{subfigure}
	\vspace{-2mm}
	\caption{\small 
		(a) shows SVD-based matrix orthogonalization; (b) applies SVD to the momentum matrix
		$M\!\in\!\mathbb{R}^{d\times d}$, i.e., $M \approx U\Sigma V^{\top}$, and keeps the top-$k$
		singular vectors to obtain $U\!\in\!\mathbb{R}^{d\times k}$ and $V\!\in\!\mathbb{R}^{k\times d}$ .} 
	\label{fig:SVD}
\end{figure*}

\begin{figure*}[tb]
	\centering
	\begin{subfigure}[b]{0.24\textwidth}
		\centering
		\includegraphics[width=\linewidth]{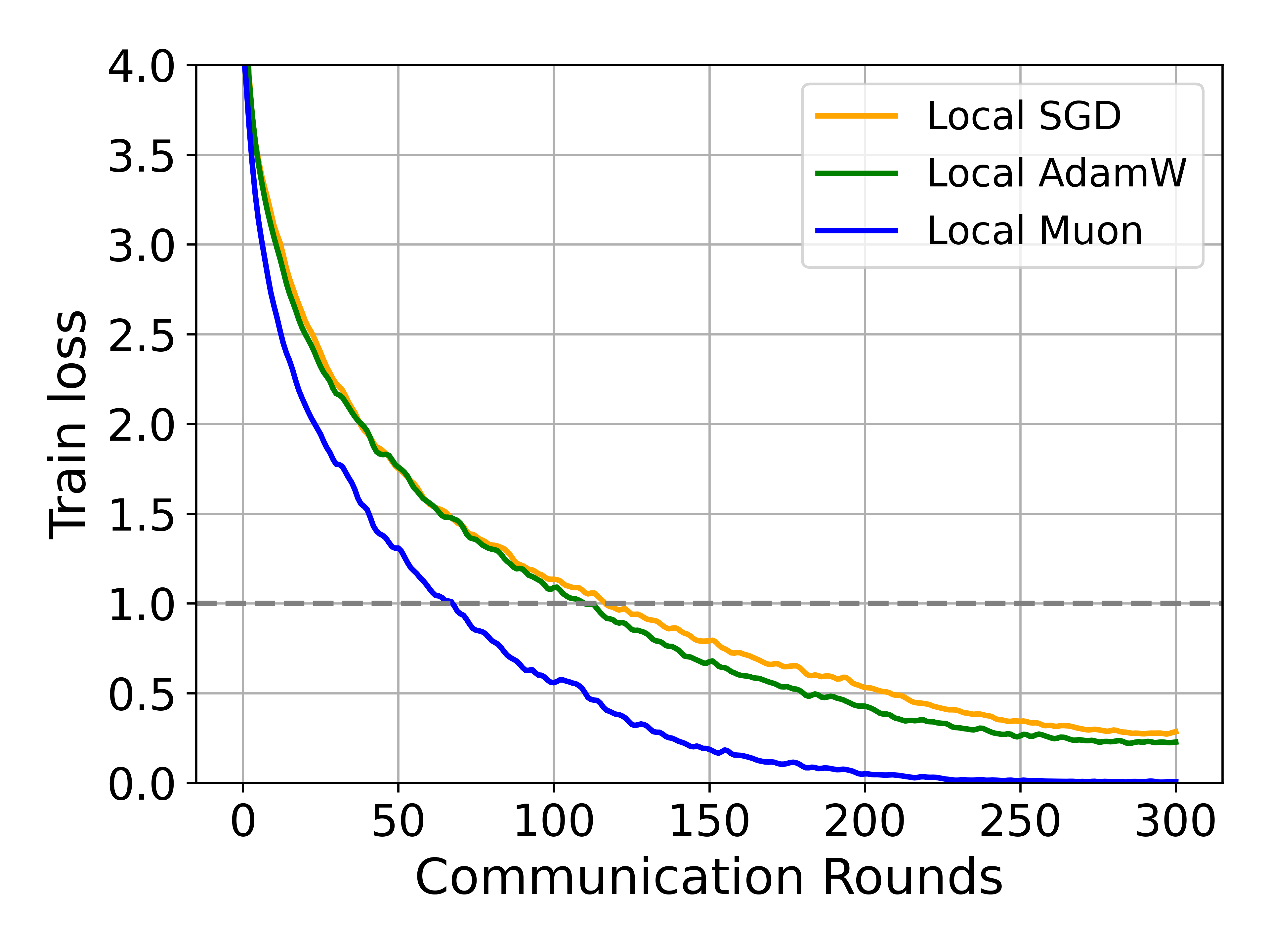}
		\caption{ResNet-18, IID}
		\label{fig:sgd_grid_search:vit}
	\end{subfigure}
	\begin{subfigure}[b]{0.24\textwidth}
		\centering
		\includegraphics[width=\linewidth]{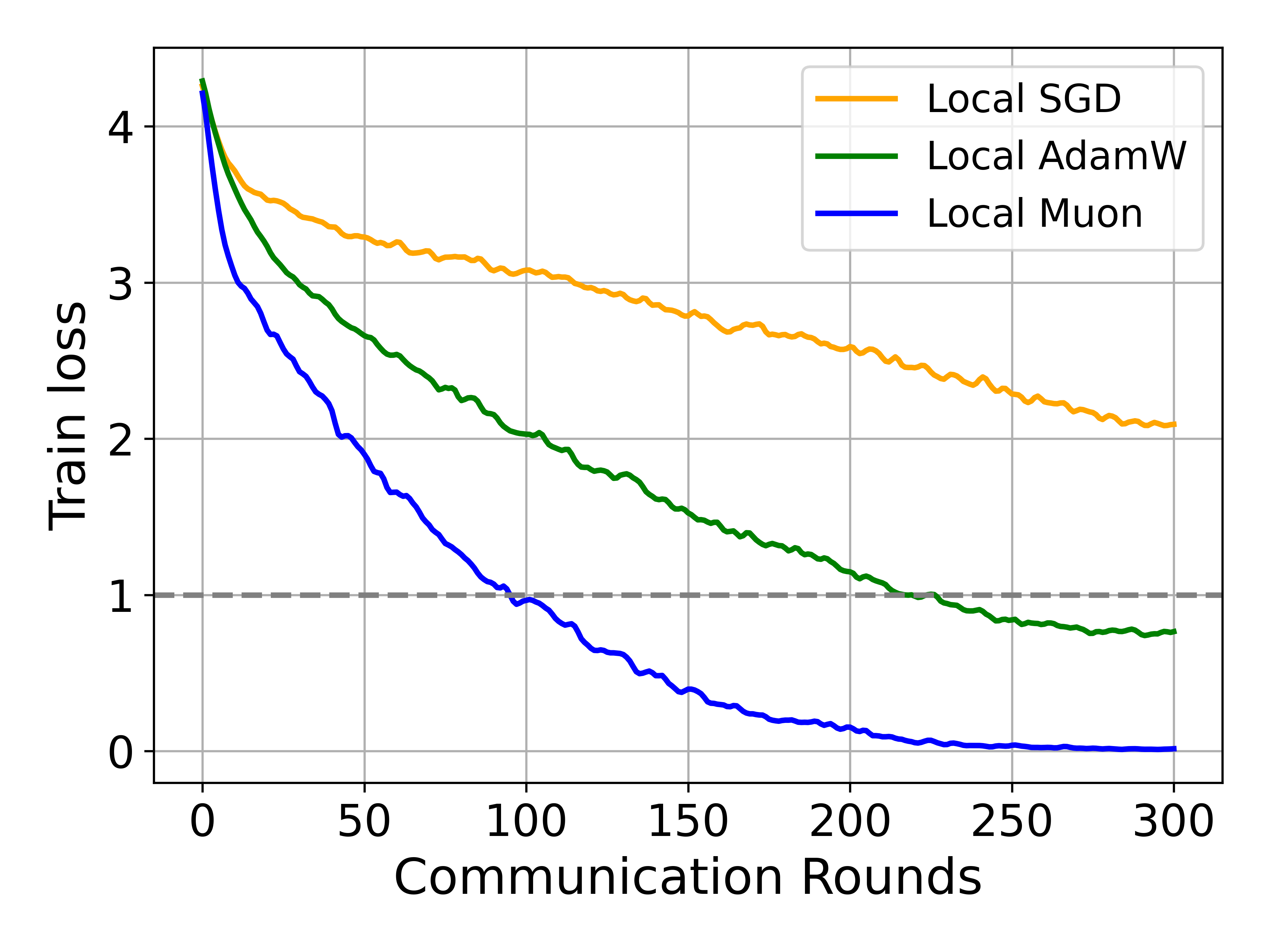}
		\caption{ViT-Tiny, IID}
		\label{fig:sgd_grid_search:gpt2}
	\end{subfigure}
	\begin{subfigure}[b]{0.24\textwidth}
		\centering
		\includegraphics[width=\linewidth]{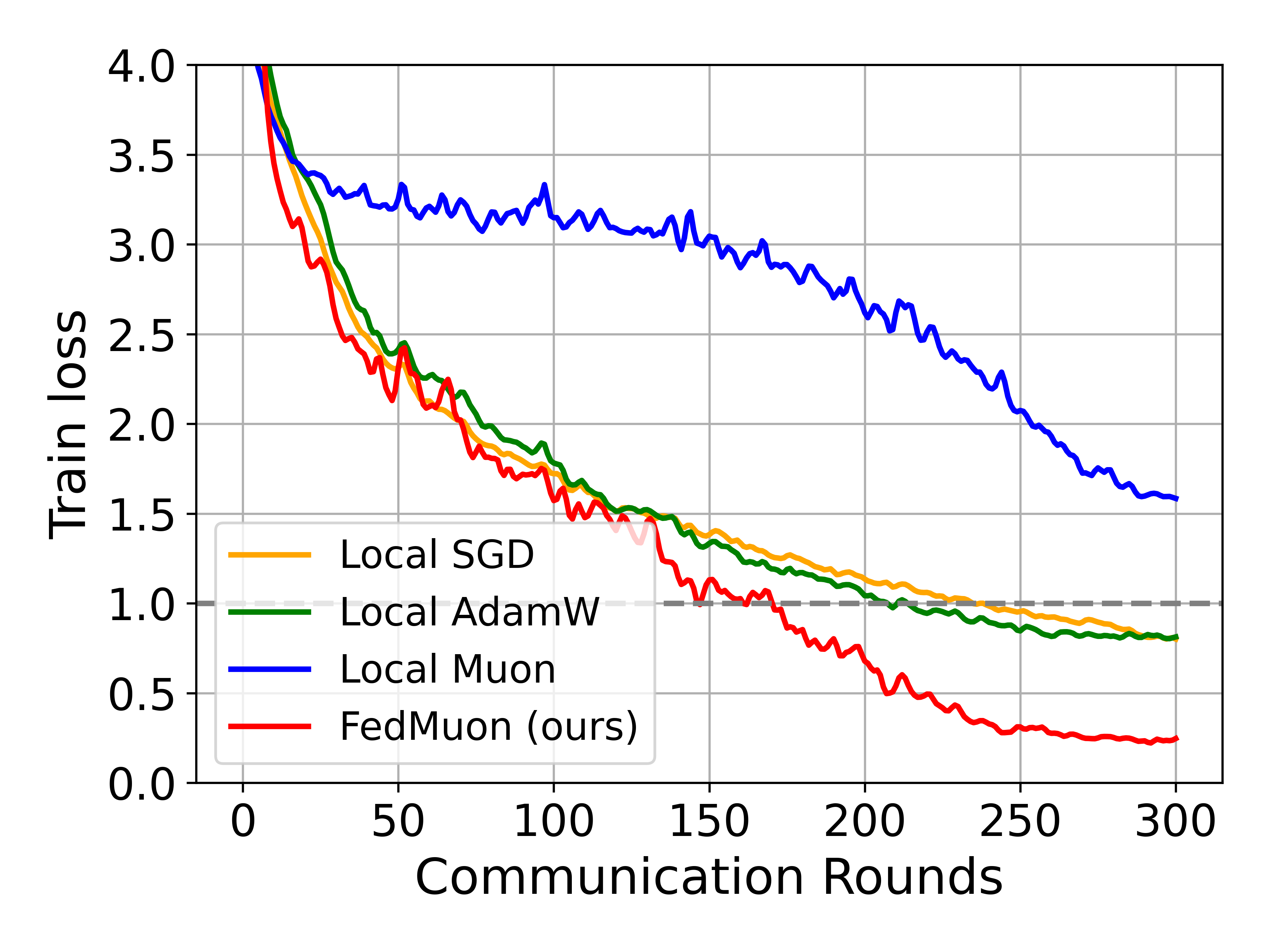}
		\caption{ResNet-18, non-IID}
		\label{fig:sgd_grid_search:bert}
	\end{subfigure}
		\begin{subfigure}[b]{0.24\textwidth}
		\centering
		\includegraphics[width=\linewidth]{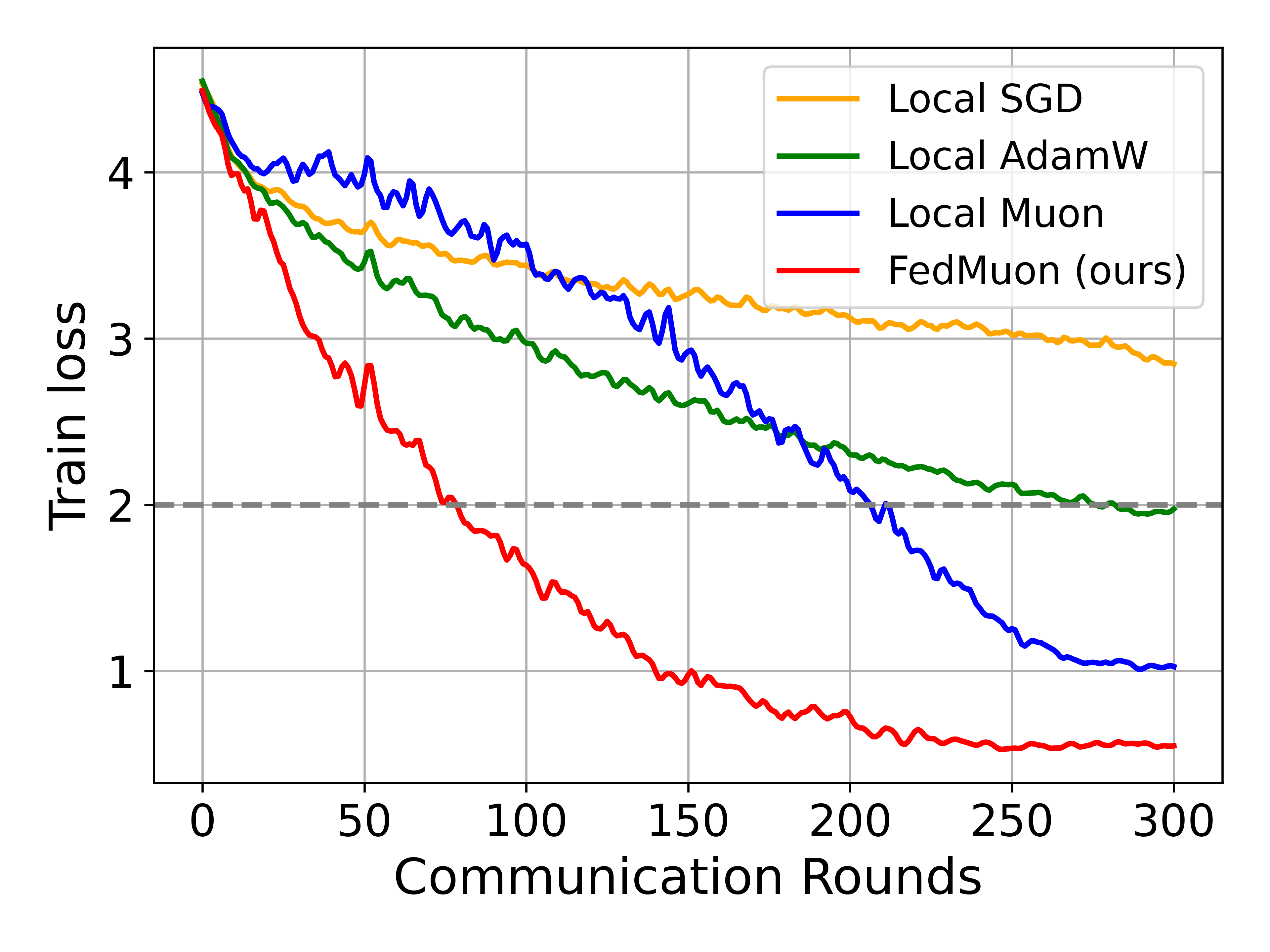}
		\caption{ViT-Tiny, non-IID}
		\label{fig:sgd_grid_search:bert}
	\end{subfigure}
	\vspace{-2mm}
	\caption{\small Performance of Local SGD, Local AdamW and \texttt{Local Muon}, we carefully tune the learning rate. }
	\label{fig:iid}
\end{figure*}

\begin{figure*}[tb]
	\centering
	\begin{subfigure}[b]{0.7\textwidth}
		\centering
		\includegraphics[width=\linewidth]{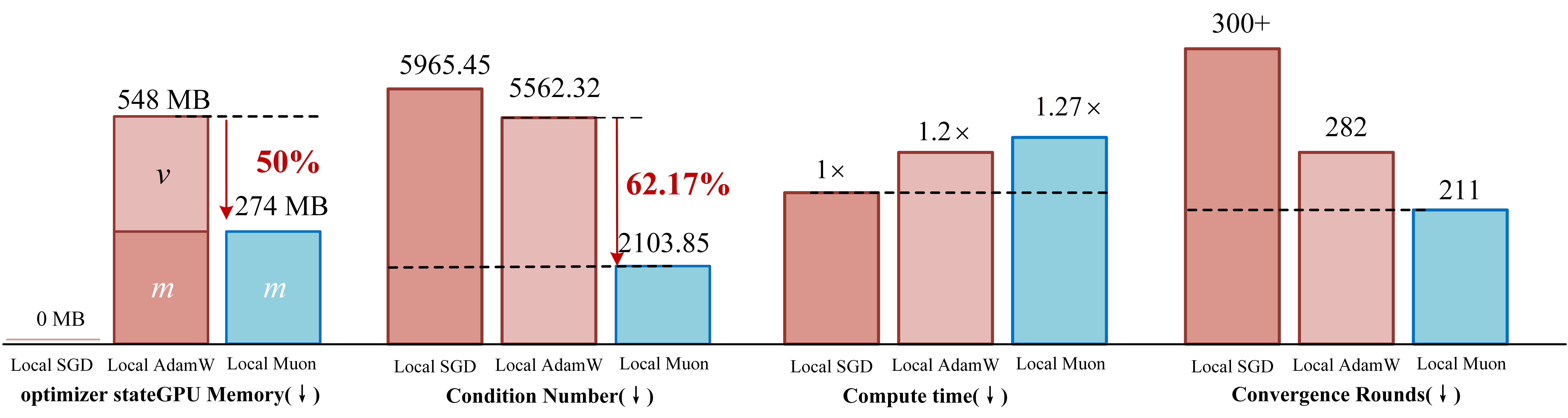}
		\caption{Comparison of local optimizers in FL.}
		\label{fig:sgd_grid_search:vit}
	\end{subfigure}
			\begin{subfigure}[b]{0.26\textwidth}
		\centering
		\includegraphics[width=\linewidth]{image/deit-cifar100-noniid.png}
		\caption{ViT-Tiny, non-IID}
		\label{fig:sgd_grid_search:bert}
	\end{subfigure}
	\vspace{-2mm}
	\caption{\small    
		(a) Analysis on ViT-Tiny with CIFAR-100, showing optimizer state memory, condition number, computation time, and convergence rounds. Local Muon achieves lower memory cost, lower the condition number, and faster convergence.
		(b) Training loss curves of ViT-Tiny under non-IID.  } 
	\label{fig:comlare}
\end{figure*}

\noindent \textbf{Our contributions} are summarized as follows:
\begin{itemize}[leftmargin=*]
	\item \textbf{Introducing Muon into Federated Learning.} We are the first to design a federated optimizer that explicitly considers the structure of parameter matrices, introducing the matrix orthogonalization method (i.e.,  \texttt{Muon}) into federated learning. Extensive experiments demonstrate its superiority. However, in highly non-IID settings, severe client drift arises. We analyze this issue from two perspectives: (1) \textbf{client preconditioner leading to client drift}, (2) \textbf{moment reinitialization}.
	\item \textbf{We propose \texttt{FedMuon}, a principled FL algorithm with Matrix Orthogonalization.} To address above challenges, \texttt{FedMuon} introduce the two mechanisms, \textbf{local-global alignment} and \textbf{momentum aggregation}. Inspired by the Hessian structure, we also design a communication-efficient aggregation strategy that communicates the SVD compression of momentum. 
	\item \textbf{Theoretical guarantees with improved convergence.} 
	\texttt{FedMuon} achieves a linear  convergence rate of 
	$\mathcal{O}(\sqrt{(L \Delta \sigma_l^2)/(S K R)}+(L \Delta)/R)$ without the widely used data  heterogeneity assumption. Due to  the local-global alignment, its convergence speed is unaffected by data heterogeneity. 
	
\end{itemize}

\section{Related Work}
\vspace{-2mm}
\textbf{$\bullet$ Optimizers in non-IID Federated Learning.} Data heterogeneity across clients is a fundamental challenge in FL. A range of algorithms have been proposed to mitigate the adverse effects of non-i.i.d. data distributions. For example, FedProx~\citep{li2018federated} introduces a proximal term to restrict local updates; SCAFFOLD~\citep{karimireddy2020scaffold} applies control variates to correct client drift; and FedCM~\citep{xu2021fedcm} leverages client momentum to stabilize updates. FedOpt~\citep{reddi2020adaptive} incorporates server-side adaptivity using Adam. More recently, \citet{sun2023efficient} proposed FedLADA to only aggregate the second-moment estimate of Adam to overcome client drift. 

\textbf{$\bullet$ Optimizers in Centralized Settings.}
Adaptive gradient methods have demonstrated superior empirical performance over SGD in centralized settings, particularly for deep neural networks. Pioneering works include Adagrad~\citep{duchi2011adaptive}, Adadelta~\citep{zeiler2012adadelta}, Adam~\citep{kingma2014adam}, AMSGrad~\citep{reddi2019convergence}, and AdamW~\citep{loshchilov2017fixing}. 
Other structured optimization methods. Although commonly used optimizers for training
deep neural networks, such as SGD, Adam, and AdamW, typically treat structured parameters (e.g., matrices) as flattened vectors, however, in recent
years, there has been growing interest in designing structured optimizers that explicitly leverage the
inherent structure of parameters. The Muon optimizer \citep{jordan6muon} has recently demonstrated that orthogonal normalization of weight update matrices can significantly accelerate neural network training.


\section{FL Problem Setup}
\vspace{-2mm}
FL aims to optimize model parameters with  local clients, i.e., minimizing the following problem:
\vspace{-2mm}
\begin{equation}
	f(\boldsymbol{x})=\frac{1}{N} \sum_{i=1}^N\left(f_i(\boldsymbol{x}):=\mathbb{E}_{\xi_i\sim \mathcal{D}_i}\left[F_i\left(\boldsymbol{x} ; \xi_i\right)\right]\right).
	\label{eq 1}
\end{equation}
The function $f_i$ represents the loss function on client $i$. $\mathbb{E}_{\xi_i \sim \mathcal{D}_i}[\cdot]$ denotes the conditional expectation with respect to  the sample $\xi_i$. $\xi_i$ is drawn from distribution $\mathcal{D}_i$ in client $i$.  $N$ is the number of clients.

\section{Challenges of  Muon in FL}
\vspace{-2mm}
\subsection{The Muon Optimizer} 
\vspace{-2mm}
\paragraph{Motivation}
Most parameters in neural networks are inherently matrix-valued (e.g., in linear layers or the Q/K/V components of attention mechanisms). However, conventional optimization algorithms such as SGD and AdamW treat these parameters as vectors, effectively flattening them during updates and thereby neglecting their matrix structure. Muon is specifically designed to address this limitation by operating Matrix Orthogonalization directly on update matrix. 

\vspace{-2mm}
\paragraph{The Muon Optimizer}
Muon has recently been proposed as an optimization method for training neural network weights that can be represented as matrices. At iteration \( t \), given the current weight \( \mathbf{W}_{t-1} \), momentum \( \beta \), learning rate \( \eta_t \), and the objective \( F(\mathbf{W}) \), the update rules for the Muon optimizer are:
\vspace{-2mm}
\begin{equation}
	\begin{aligned}
		\mathbf{M}_t &= \beta \mathbf{M}_{t-1} + \nabla F(\mathbf{W}_{t-1}); \\
		\mathbf{O}_t &= \mathrm{Newton\text{-}Schulz}(\mathbf{M}_t); \\
		\mathbf{W}_t &= \mathbf{W}_{t-1} - \eta_t \mathbf{O}_t.
	\end{aligned}
	\label{eq:muon-update}
\end{equation}
Here, \( \mathbf{M}_t \) represents the momentum of the gradient at iteration \( t \), initialized as a zero matrix when \( t = 0 \). In Eq.(\ref{eq:muon-update}), a Newton–Schulz iteration is employed to approximate the solution of \( (\mathbf{M}_t \mathbf{M}_t^\top)^{-1/2} \mathbf{M}_t \). Let \( \mathbf{U} \boldsymbol{\Sigma} \mathbf{V}^\top = \mathbf{M}_t \) be the singular value decomposition (SVD) of \( \mathbf{M}_t \). Then, we have
$(\mathbf{M}_t \mathbf{M}_t^\top)^{-1/2} \mathbf{M}_t = \mathbf{U} \mathbf{V}^\top,$
which orthogonalizes \( \mathbf{M}_t \) (see  Figure \ref{fig:SVD}(a)). Intuitively, this orthogonalization ensures that the update matrices remain isomorphic, preventing the weights from learning solely along a few dominant directions. All matrix orthogonalization operations in this paper are computed using five Newton-Schulz iterations, resulting in about 5\% higher computation time compared to AdamW \citep{jordan6muon}.


\subsection{Challenges of  Muon in FL}
 Despite the widespread use of Muon in centralized deep learning, its adaptation to federated settings remains largely unexplored. In this subsection, we analyze two fundamental challenges that hinder its effectiveness in FL settings.
\vspace{-1mm}
{\colorlet{shadecolor}{LightBlue}
	\begin{snugshade}
		\begin{center}
			{\bf(Challenge 1)} \textit{\textbf{In non-IID FL, Muon’s client-specific preconditioner scales gradients from the client’s local data distribution, causing misalignment and cancellation in aggregation.}}
		\end{center}
		\vspace{-0.2cm}
\end{snugshade}}
\vspace{-2mm}
\textbf{Challenge Analysis:} The matrix orthogonalization in Muon can be viewed  as applying a client-specific linear preconditioner $P_i$ to each client's gradient (which can be approximated by  Newton-Schulz), transforming the update direction from $g_i$ to $P g_i$. In the case of   non-IID, the gradients $\{g_i\}$ are distributed across their respective dominant subspaces, and the $ P_i $ are independently estimated from the local data geometry of each client. This leads to direction mismatch and correlation/amplification: the global update is approximated as 
$
\sum_i \tilde{g}_i = \sum_i  P_i g_i.
$
When the $ \{P_i\} $ apply different "rotations/scalings" to the gradient subspaces across clients, the sign and magnitude of $ \langle \tilde{g}_i, \tilde{g}_j \rangle $ fluctuate significantly, making it prone to direction cancellation (weakening the norm and making step size ineffective) or phase misalignment (leading to oscillations as it crosses stable regions). These mechanisms together result in the phenomenon of \textbf{local–global inconsistency}: the convergence shown on the client side (local loss decreases rapidly) does not translate into global progress (global loss/accuracy stagnates or degrades).
\vspace{-1mm}
{\colorlet{shadecolor}{LightRed}
	\begin{snugshade}
		\begin{center}
			{\bf(Challenge 2)} \textit{\textbf{Moment reinitialization:} reinitializing the moment of Muon from scratch in every round hinders the convergence rate.}
		\end{center}
		\vspace{-0.2cm}
\end{snugshade}}
\vspace{-2mm}
\textbf{Challenge Analysis:} In FL, the Muon optimizer state is reinitialized to zero at the beginning of each round, i.e., $\boldsymbol{M}^{r,0}_i \gets \boldsymbol{0}$. This reset erases temporal memory across rounds, preventing the accumulation of momentum and thereby slowing convergence. Moreover, accumulating momentum from zero exacerbates client drift.

	\begin{algorithm}[H]
		\caption{\texttt{FedMuon} Algorithm}
		\begin{algorithmic}[1]
			\STATE {\textbf{Initial} model $\boldsymbol{x}^0$, $\beta=0.98$, the number of all clients $N$, each round selected clients $S$.}
			\FOR{$r = 1, \dots, R$}
			\FOR{each selected client $i \in \{1, \dots, S\}$ in parallel}
			\STATE $\boldsymbol{x}_{i}^{r,0} \gets \boldsymbol{x}^r$, \fcolorbox{LightRed}{LightRed}{$\boldsymbol{M}^{r,0}_{i} \gets \boldsymbol{\bar{M}}^{r}$;}
			\FOR{$k = 1, \dots, K$}
			\STATE $\boldsymbol{G}^{r,k}_i\gets\nabla f_i(\boldsymbol{x}_i^{r, k} ; \xi_i)$;
			 $\boldsymbol{M}^{r,k}_i = \beta \boldsymbol{M}^{r,k-1}_i + \boldsymbol{G}^{r,k}_i$;
			\STATE $\boldsymbol{U}^{r,k}_i, \boldsymbol{\Sigma}^{r,k}_i, \boldsymbol{V}^{r,k}_i = \mathrm{SVD}(\boldsymbol{M}^{r,k}_i)$;
			{\fcolorbox{LightBlue}{LightBlue}{$\boldsymbol{x}^{r,k+1}_i\! =\!\boldsymbol{x}^{r,k}_i \!\!-\! \eta [(1\!-\!\alpha)\boldsymbol{U}^{r,k}_i {\boldsymbol{V}^{r,k}_i}^{\top}\!+\!\alpha\boldsymbol{\Delta}_G^r ]$;}}
			\ENDFOR
			\STATE Communicate $(\boldsymbol{x}^{r, K}_i\!-\!\boldsymbol{x}^{r, 0}_i,  \boldsymbol{M}^{r,K}_i)$ to Server;
			\ENDFOR
			\STATE $\boldsymbol{\Delta}_G^{r+1}=-\frac{1}{SK\eta} \sum_{i=1}^S (\boldsymbol{x}^{r, K}_i-\boldsymbol{x}^{r, 0}_i)$; $\boldsymbol{x}^{r+1} =\boldsymbol{x}^{r} +\frac{1}{S} \sum_{i=1}^S (\boldsymbol{x}^{r, K}_i-\boldsymbol{x}^{r, 0}_i)$;
			\STATE \fcolorbox{LightRed}{LightRed}{$\boldsymbol{\bar{M}}^{r+1} =\frac{1}{S} \sum_{i=1}^S  \boldsymbol{M}^{r,K}_i$;}
			Communicate $(\boldsymbol{x}^{r+1}, \boldsymbol{\bar{M}}^{r+1},\boldsymbol{\Delta}_G^{r+1} ) $ to Clients.
			\ENDFOR
		\end{algorithmic}
		\label{algorithm_FedMuon}
	\end{algorithm}

\begin{figure}[tb]
	\includegraphics[width=0.95\textwidth]{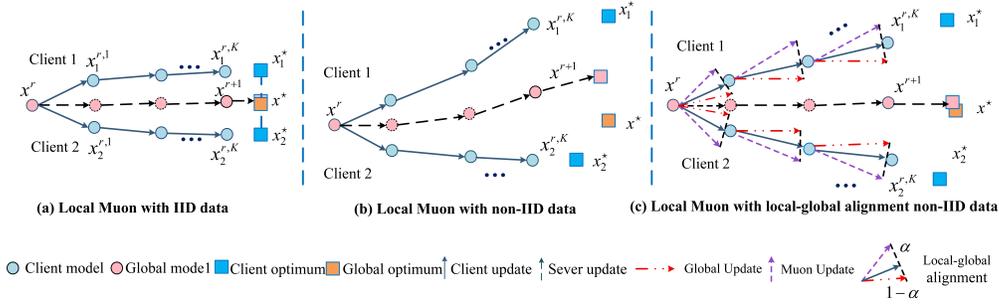}
	\vspace{-2mm}
	\caption{\small{An illustration of  \texttt{FedMuon}, which corrects client drift  through local-global alignment. }}
	\label{fig 4}
	
\end{figure}

\section{Our Algorithm: FedMuon}
\vspace{-2mm}
To robustly leverage matrix orthogonalization in FL, we propose \texttt{FedMuon}, with two core mechanisms for the non-IID regime.
\subsection{Mechanism I: Local--Global Alignment}

{\colorlet{shadecolor}{LightBlue}
	\begin{snugshade}
		\begin{center}
	{\bf (Q1)} 
\textit{\textbf{How to overcome \textbf{local–global inconsistency} in Local Muon?}}
		\end{center}
		\vspace{-0.2cm}
\end{snugshade}}
\vspace{-2mm}
To address  \textbf{Challenge 1}, we incorporate  local-global alignment  into the local update rule:
\vspace{-2mm}
\begin{equation}
\boldsymbol{x}^{r,k+1}_i =\boldsymbol{x}^{r,k}_i \!-\! \eta \big[(1-\alpha)\boldsymbol{U}^{r,k}_i {\boldsymbol{V}^{r,k}_i}^{\top}+\alpha\boldsymbol{\Delta}_G^r \big] ,
\end{equation}
where $\boldsymbol{\Delta}_G^r\!=\!-\frac{1}{SK\eta} \sum_{i=1}^S (\boldsymbol{x}_i^{r,K} \!-\! \boldsymbol{x}_i^{r,0})$ is the estimated global update. $\alpha$ is the trade-off coefficient between local and global updates.
 As shown in \textbf{Figure~\ref{fig 4}}, this alignment reduces the divergence of local models and improves global consistency. We also validate its effectiveness in the following experiments (see \textbf{Table~\ref{tab:ablation}} below).


\subsection{Mechanism II: Momentum Aggregation}
\vspace{-2mm}
{\colorlet{shadecolor}{LightRed}
	\begin{snugshade}
		\begin{center}
{\bf (Q2)} 
\textbf{\textit{How to initialize momentum of Muon in local?}}
		\end{center}
		\vspace{-0.2cm}
\end{snugshade}}
\vspace{-2mm}
To achieve better initialization of the momentum $\boldsymbol{M}$ in local, we aggregate local  momentum $\boldsymbol{M}^{r,K}_i$ and transmit the aggregated result $\boldsymbol{\bar{M}}$ back to the clients. This strategy partially mitigates the client drift caused by reinitializing momentum from zero, and better aligns local updates with the global update direction (see \textbf{Table~\ref{tab:ablation}} below).
\vspace{-1mm}
{\colorlet{shadecolor}{LightRed}
	\begin{snugshade}
		\begin{center}
			{\bf (Q3)} 
			\textbf{\textit{How to efficiently communicate momentum matrices?}}
		\end{center}
		\vspace{-0.2cm}
\end{snugshade}}
\vspace{-2mm}
\textbf{Momentum Compression via SVD.} 
Directly communicating the full momentum matrix $M$ would introduce prohibitive communication overhead. 
To reduce the cost, we compress $M$ using singular value decomposition (SVD):  
$M = U \Sigma V^\top,$
where $U$ and $V$ are orthogonal matrices and $\Sigma$ is the diagonal matrix of singular values. 
Instead of transmitting the full decomposition, we retain only the top-$k$ singular values (with $k$ set to 5\% of the matrix rank), yielding a low-rank approximation (see  Figure \ref{fig:SVD}):
$
M \approx U_k \Sigma_k V_k^\top.
$
This significantly reduces the communication cost $95\%$. We refer to this variant as \texttt{FedMuon\_SVD}. In the following experiments, we show that this approach achieves performance comparable to \texttt{FedMuon} (see Table \ref{tab:avg}).

\section{Theoretical Analysis}

\begin{table*}[tb]
	\centering
	\caption{Theoretical Comparison of \texttt{FedMuon} and Baseline Federated Optimization Methods} 
\vspace{-2mm}
	\label{table 1}
	\setlength{\tabcolsep}{0pt}	\begin{tabular}{llcccccc}
		\midrule[1pt]
		\centering
		& \textbf{Research work}   &  \textbf{Consider Matrix Structures } &\textbf{Convergence Rate}& \textbf{Assumption} \\
		\midrule[0.5pt]		
		&Local SGD & $\times$ &  $\mathcal{O}\left(\sqrt{\frac{L \Delta (\sigma_l^2+\sigma_g^2)}{S K R}}+\frac{L \Delta}{R}\right)$&Data Heterogeneity\\
		
		\rowcolor{LightRed}
		&\texttt{Local Muon}  & \checkmark &  $\mathcal{O}\left(\sqrt{\frac{L \Delta (\sigma_l^2+\sigma_g^2)}{S K R}}+\frac{L \Delta}{R}\right)$&Data Heterogeneity\\
		\rowcolor{LightBlue}
		&\texttt{FedMuon} (ours) & \checkmark &  $\mathcal{O}\left(\sqrt{\frac{L \Delta \sigma_l^2}{S K R}}+\frac{L \Delta}{R}\right)$&Without data heterogeneity\\
		\midrule[1.5pt]
	\end{tabular}
	\label{table com}
\end{table*}
\label{convergence_analysis}
In this part, we give the convergence theoretical analysis of our proposed \texttt{FedMuon} algorithm. Firstly we state some standard assumptions for the non-convex function $f$.
\begin{assumption}[Smoothness]
	\label{smoothness}
	 \textit{The non-convex $f_{i}$ is a $L$-smooth function for all $i\in[m]$, i.e., $\Vert\nabla f_{i}(\boldsymbol{x})-\nabla f_{i}(\boldsymbol{y})\Vert\leq L\Vert\boldsymbol{x}-\boldsymbol{y}\Vert$, for all $\boldsymbol{x},\boldsymbol{y}\in\mathbb{R}^{d}$.}
\end{assumption}
\begin{assumption}[Bounded Stochastic Gradient]
	\label{bounded_stochastic_gradient_I}
	\textit{$\boldsymbol{g}_{i}^{r}=\nabla f_{i}(\boldsymbol{x}_{i}^{r}, \xi_i^{r})$ computed by using a sampled mini-batch data $\xi_i^{r}$ in the local client $i$ is an unbiased estimator of $\nabla f_{i}$ with bounded variance, i.e., $\mathbb{E}_{\xi_i^{r}}[\boldsymbol{g}_{i}^{r}]=\nabla f_{i}(\boldsymbol{x}_{i}^{r})$ and     $\mathbb{E}_{\xi_i^{r}}\Vert g_{i}^{r} - \nabla f_{i}(\boldsymbol{x}_{i}^{r})\Vert^{2} \leq \sigma_{l}^{2}$, for all $\boldsymbol{x}_{i}^{r}\in\mathbb{R}^{d}$.}
\end{assumption}

\begin{assumption}[Bounded Heterogeneity]
	\label{bounded_heterogeneity}
	\textit{The dissimilarity between local clients is bounded on the gradients, i.e., $\Vert\nabla f_{i}(\boldsymbol{x})-\nabla f(\boldsymbol{x})\Vert^{2}\leq\sigma_{g}^{2}$, for all $\boldsymbol{x}\in\mathbb{R}^{d}$.}
\end{assumption}
These assumptions are standard in FL optimization literature \citep{fan2024locally,sun2023efficient}.
\begin{theorem}[Convergence for non-convex functions]\label{theorem_convergence_rate1}
	Under Assumptions \ref{smoothness}, \ref{bounded_stochastic_gradient_I}, if we take $g^0=0$,$\beta_1=0,\lambda=0$
	then \texttt{FedMuon} converges as follows
	\vspace{-2mm}
	\begin{equation}
		\frac{1}{R} \sum_{r=0}^{R-1} \mathbb{E}\left[\left\|\nabla f\left(\boldsymbol{x}^{r}\right)\right\|^2\right] \lesssim \mathcal{O}\left(\sqrt{\frac{L \Delta \sigma_l^2}{S K R }}+\frac{L \Delta}{R}\right) .
	\end{equation}
	Here $G_0:=\frac{1}{N} \sum_{i=1}^N\left\|\nabla f_i\left(\boldsymbol{x}^0\right)\right\|^2$,$\Delta=f\left(\boldsymbol{x}^0\right)-f^{\star} $, $S$ is the number of participating clients per round, $K$ is the number of local iterations, and $R$ is the total number of communication rounds. 
\end{theorem}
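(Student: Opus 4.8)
The plan is to collapse the $K$ local orthogonalized steps of each round into a single \emph{global normalized-momentum step} and then run a smoothness-based descent argument on the server iterates $\boldsymbol{x}^r$, exploiting the fact that the shared alignment direction $\boldsymbol{\Delta}_G^r$ keeps client drift bounded without reference to $\sigma_g$. First I would put the server update in closed form. Writing $\boldsymbol{O}_i^{r,k}=\boldsymbol{U}_i^{r,k}{\boldsymbol{V}_i^{r,k}}^{\top}=\mathrm{msign}(\boldsymbol{M}_i^{r,k})$ for the orthogonalized direction (with $\beta_1=0$ we have $\boldsymbol{M}_i^{r,k}=\boldsymbol{G}_i^{r,k}$, so $\boldsymbol{O}_i^{r,k}=\mathrm{msign}(\boldsymbol{G}_i^{r,k})$), the telescoped local trajectory is $\boldsymbol{x}_i^{r,K}-\boldsymbol{x}_i^{r,0}=-\eta\sum_{k=1}^{K}[(1-\alpha)\boldsymbol{O}_i^{r,k}+\alpha\boldsymbol{\Delta}_G^{r}]$. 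Averaging over the $S$ sampled clients and matching the server's definition of $\boldsymbol{\Delta}_G^{r+1}$ yields the clean recursion
\begin{equation}
\boldsymbol{x}^{r+1}-\boldsymbol{x}^{r}=-\eta K\,\boldsymbol{\Delta}_G^{r+1},\qquad \boldsymbol{\Delta}_G^{r+1}=(1-\alpha)\bar{\boldsymbol{O}}^{r}+\alpha\,\boldsymbol{\Delta}_G^{r},
\end{equation}
where $\bar{\boldsymbol{O}}^{r}:=\frac{1}{SK}\sum_{i=1}^{S}\sum_{k=1}^{K}\boldsymbol{O}_i^{r,k}$. Thus $\boldsymbol{\Delta}_G^{r}$ is an exponential moving average of averaged orthogonalized directions, and the whole method reduces to a single global momentum descent.

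Next I would apply the $L$-smoothness descent lemma to this one-step recursion,
\begin{equation}
\mathbb{E} f(\boldsymbol{x}^{r+1})\le f(\boldsymbol{x}^{r})-\eta K\,\mathbb{E}\langle\nabla f(\boldsymbol{x}^{r}),\boldsymbol{\Delta}_G^{r+1}\rangle+\tfrac{L\eta^{2}K^{2}}{2}\,\mathbb{E}\|\boldsymbol{\Delta}_G^{r+1}\|^{2},
\end{equation}
and bound the two terms separately. The quadratic term is easy: each orthogonalized factor is a partial isometry with $\|\boldsymbol{O}_i^{r,k}\|_F^{2}=\mathrm{rank}(\boldsymbol{M}_i^{r,k})\le d$ (the smaller matrix dimension), and since $\boldsymbol{\Delta}_G^{r+1}$ is a convex (EMA) combination of averages of such factors, the triangle inequality gives $\|\boldsymbol{\Delta}_G^{r+1}\|_F^{2}\le d$, contributing $O(L\eta^{2}K^{2}d)$ per round. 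The inner-product term is the heart of the argument. I would unroll the momentum as $\boldsymbol{\Delta}_G^{r+1}=\sum_{s=0}^{r}(1-\alpha)\alpha^{r-s}\bar{\boldsymbol{O}}^{s}$ and, for each summand, relate $\langle\nabla f(\boldsymbol{x}^{r}),\bar{\boldsymbol{O}}^{s}\rangle$ to $\langle\nabla f(\boldsymbol{x}^{s}),\bar{\boldsymbol{O}}^{s}\rangle$ by absorbing the difference into an $L\|\boldsymbol{x}^{r}-\boldsymbol{x}^{s}\|$ \emph{momentum-lag} error whose sum is tamed by the geometric weights $\alpha^{r-s}$. For the on-round part I would invoke the orthogonalization duality $\langle\boldsymbol{G},\mathrm{msign}(\boldsymbol{G})\rangle=\|\boldsymbol{G}\|_{*}\ge\|\boldsymbol{G}\|_F$ together with $\|\mathrm{msign}(\boldsymbol{G})\|_{\mathrm{op}}=1$ to extract a positive descent term in $\|\nabla f\|$, leaving stochastic-noise cross terms (controlled by $\sigma_l$) and local-drift cross terms $\langle\nabla f_i(\boldsymbol{x}^{s})-\nabla f_i(\boldsymbol{x}_i^{s,k}),\boldsymbol{O}_i^{s,k}\rangle$.

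A separate lemma would bound the client drift, $\mathbb{E}\|\boldsymbol{x}_i^{r,k}-\boldsymbol{x}^{r}\|^{2}\lesssim\eta^{2}k^{2}d$. Crucially, because every local step shares the common alignment direction $\alpha\boldsymbol{\Delta}_G^{r}$ and the orthogonalized directions have Frobenius norm capped by $\sqrt{d}$ regardless of the local data distribution, this bound is \emph{independent of $\sigma_g$}, which is precisely the mechanism that eliminates the heterogeneity assumption. I would then telescope the descent inequality over $r=0,\dots,R-1$, divide by $\eta K R$, and tune $\eta\asymp\sqrt{S/(KR)}$ to balance the optimization term $\tfrac{L\Delta}{\eta K R}$ against the $\sigma_l^{2}$ noise term, yielding $\mathcal{O}(\sqrt{L\Delta\sigma_l^2/(SKR)}+L\Delta/R)$.

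\textbf{Main obstacle.} The hardest step is the inner-product lower bound under the $\mathrm{msign}$ nonlinearity. Since orthogonalization is not linear, $\mathbb{E}[\mathrm{msign}(\boldsymbol{G})]\ne\mathrm{msign}(\nabla f)$, so a naive duality bound leaves an $O(\sqrt{d}\,\sigma_l)$ \emph{bias} term that shrinks with neither $S$ nor $K$ and would destroy the advertised $1/\sqrt{SK}$ variance reduction. Recovering the stated SGD-type rate therefore requires showing that the averaged empirical direction $\bar{\boldsymbol{O}}^{r}$ concentrates around the population orthogonalized gradient with error decaying like $1/\sqrt{SK}$ — essentially a matrix-perturbation estimate for $\mathrm{msign}$, which is delicate near clustered singular values — while absorbing the nuclear-to-Frobenius dimension factor into the $\mathcal{O}(\cdot)$ constants. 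Getting this concentration step quantitatively right, rather than the routine telescoping and step-size tuning, is where the real difficulty lies.
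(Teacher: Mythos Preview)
Your server-side EMA reduction $\boldsymbol{\Delta}_G^{r+1}=(1-\alpha)\bar{\boldsymbol{O}}^r+\alpha\boldsymbol{\Delta}_G^r$ matches the paper, but the core mechanism you propose for handling the orthogonalization is \emph{not} what the paper does, and the obstacle you flag is fatal to your route as written.

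The paper never invokes the nuclear/operator duality $\langle\boldsymbol{G},\mathrm{msign}(\boldsymbol{G})\rangle=\|\boldsymbol{G}\|_*$. It runs a purely Euclidean gradient-tracking argument: with $g^{r+1}:=\boldsymbol{\Delta}_G^{r+1}$ it defines $\mathcal{E}_r:=\mathbb{E}\|\nabla f(\boldsymbol{x}^r)-g^{r+1}\|^2$, proves the descent-with-error inequality $\mathbb{E} f(\boldsymbol{x}^{r+1})\le\mathbb{E} f(\boldsymbol{x}^r)-c_1\gamma\,\mathbb{E}\|\nabla f(\boldsymbol{x}^r)\|^2+c_2\gamma\,\mathcal{E}_r$, and bounds $\mathcal{E}_r$ recursively via the EMA contraction. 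The orthogonalization enters only through the add/subtract trick
\[
\bar{\boldsymbol{O}}^r-\nabla f(\boldsymbol{x}^r)
=\Bigl(\tfrac{1}{SK}\textstyle\sum_{i,k}\boldsymbol{G}_i^{r,k}-\nabla f(\boldsymbol{x}^r)\Bigr)
+\tfrac{1}{SK}\textstyle\sum_{i,k}\boldsymbol{U}_i^{r,k}\bigl(I-\boldsymbol{\Sigma}_i^{r,k}\bigr){\boldsymbol{V}_i^{r,k}}^{\!\top},
\]
so the first bracket supplies the familiar $\sigma_l^2/(SK)+L^2 U_r$ terms exactly as in a FedAvg-style analysis, while the second contributes $\|\boldsymbol{U}(I-\boldsymbol{\Sigma})\boldsymbol{V}^\top\|_F^2=\sum_j(1-\sigma_j)^2\le(1-\sigma)^2 d$, a residual measuring how far the stochastic-gradient singular values sit from $1$. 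There is no concentration or perturbation estimate for $\mathrm{msign}$ anywhere; the paper simply carries this $(1-\sigma)^2 d$ term through the recursion and drops it at the very last step.

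Your duality route would naturally control $\tfrac{1}{R}\sum_r\mathbb{E}\|\nabla f(\boldsymbol{x}^r)\|_*$ rather than the squared Frobenius norm in the theorem, and the $\mathrm{msign}$-concentration step you yourself call ``where the real difficulty lies'' is left unresolved in your plan. So as a proof of the theorem as stated, the proposal has a genuine gap: it is missing the key device --- the Euclidean tracking error $\mathcal{E}_r$ together with the $\boldsymbol{U}(I-\boldsymbol{\Sigma})\boldsymbol{V}^\top$ decomposition --- that lets the paper bypass the concentration question entirely.
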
	

The detailed proof is provided in the \textbf{Appendix}. 
As summarized in \textbf{Table~\ref{table 1}}, the convergence rate of \texttt{FedMuon} is 
$
\mathcal{O}\!\left(\sqrt{\tfrac{L \Delta (\sigma_l^2)}{S K R}} + \tfrac{L \Delta}{R}\right),
$
which is faster than than that of  both Local Muon and Local SGD, $
\mathcal{O}\!\left(\sqrt{\tfrac{L \Delta (\sigma_l^2+\sigma_g^2)}{S K R}} + \tfrac{L \Delta}{R}\right)
$. 
Notably, our result does not rely on \textbf{Assumption 3}. 
This improvement stems from the suppression of local drift achieved by the proposed local--global alignment mechanism. 
The effectiveness of this design is further validated in the \textbf{ablation study} (Table~\ref{tab:ablation}).

\begin{figure*}[tb]
	\centering
	\begin{minipage}[t]{0.24\textwidth}
		\centering
		\subcaptionbox{ResNet18, Dir-0.1}{\includegraphics[width=\textwidth]{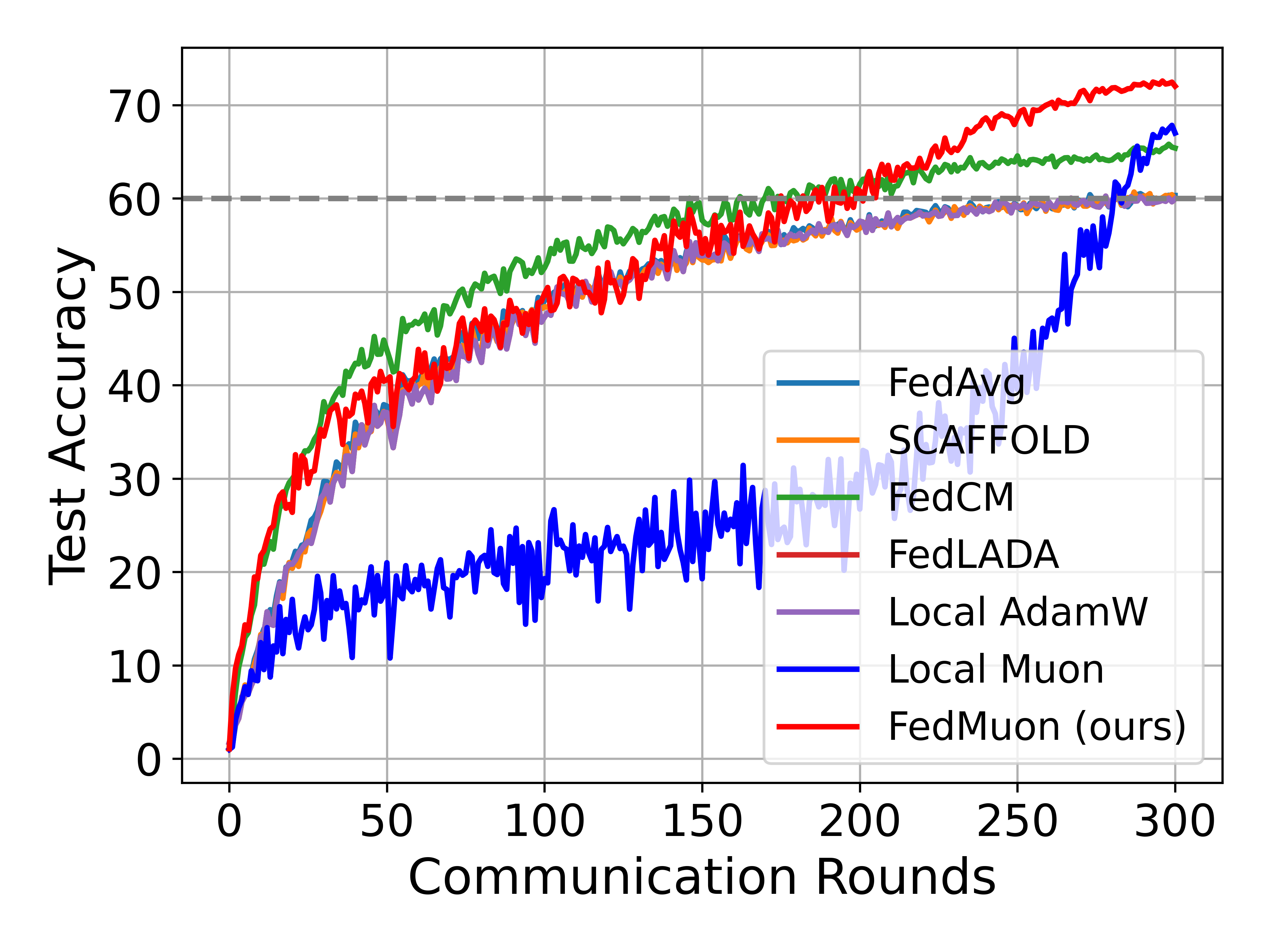}}
	\end{minipage}
	\begin{minipage}[t]{0.24\textwidth}
		\subcaptionbox{ResNet18, Dir-0.1}{\includegraphics[width=\textwidth]{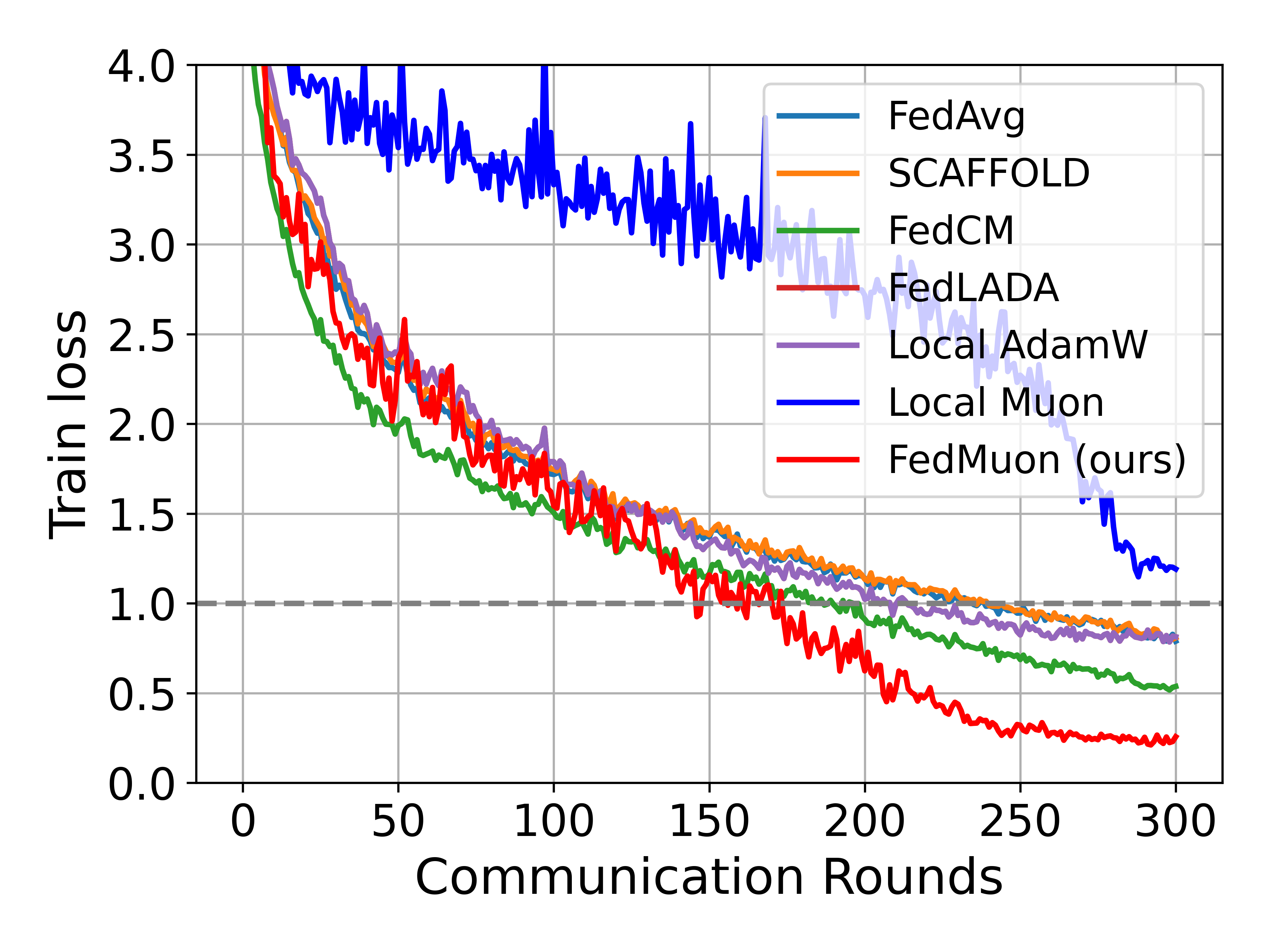}}
	\end{minipage}
	\begin{minipage}[t]{0.24\textwidth}
		\subcaptionbox{ViT-Tiny, Dir-0.1}{\includegraphics[width=\textwidth]{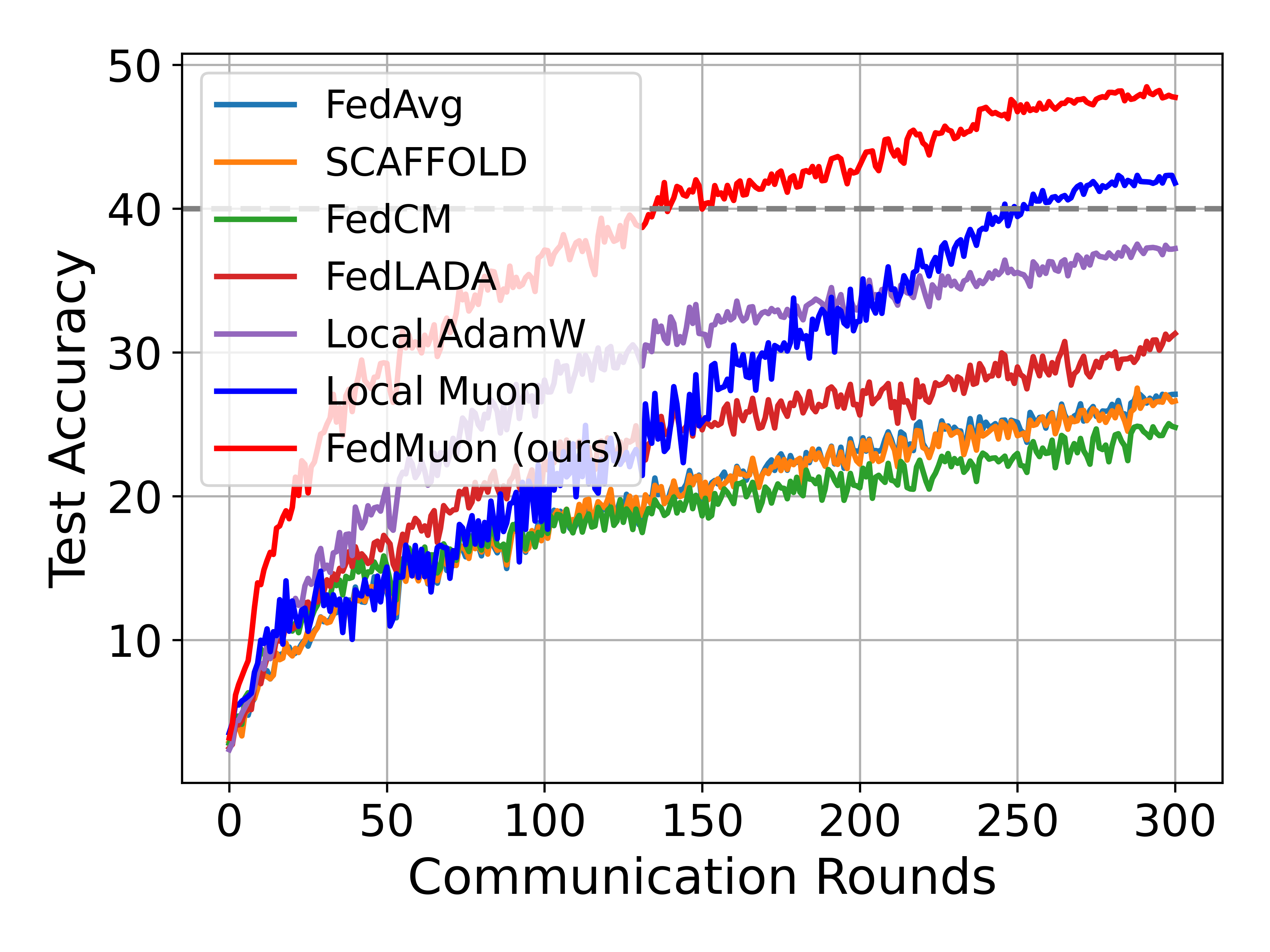}}
	\end{minipage}
	\begin{minipage}[t]{0.24\textwidth}
		\subcaptionbox{ViT-Tiny, Dir-0.1}{\includegraphics[width=\textwidth]{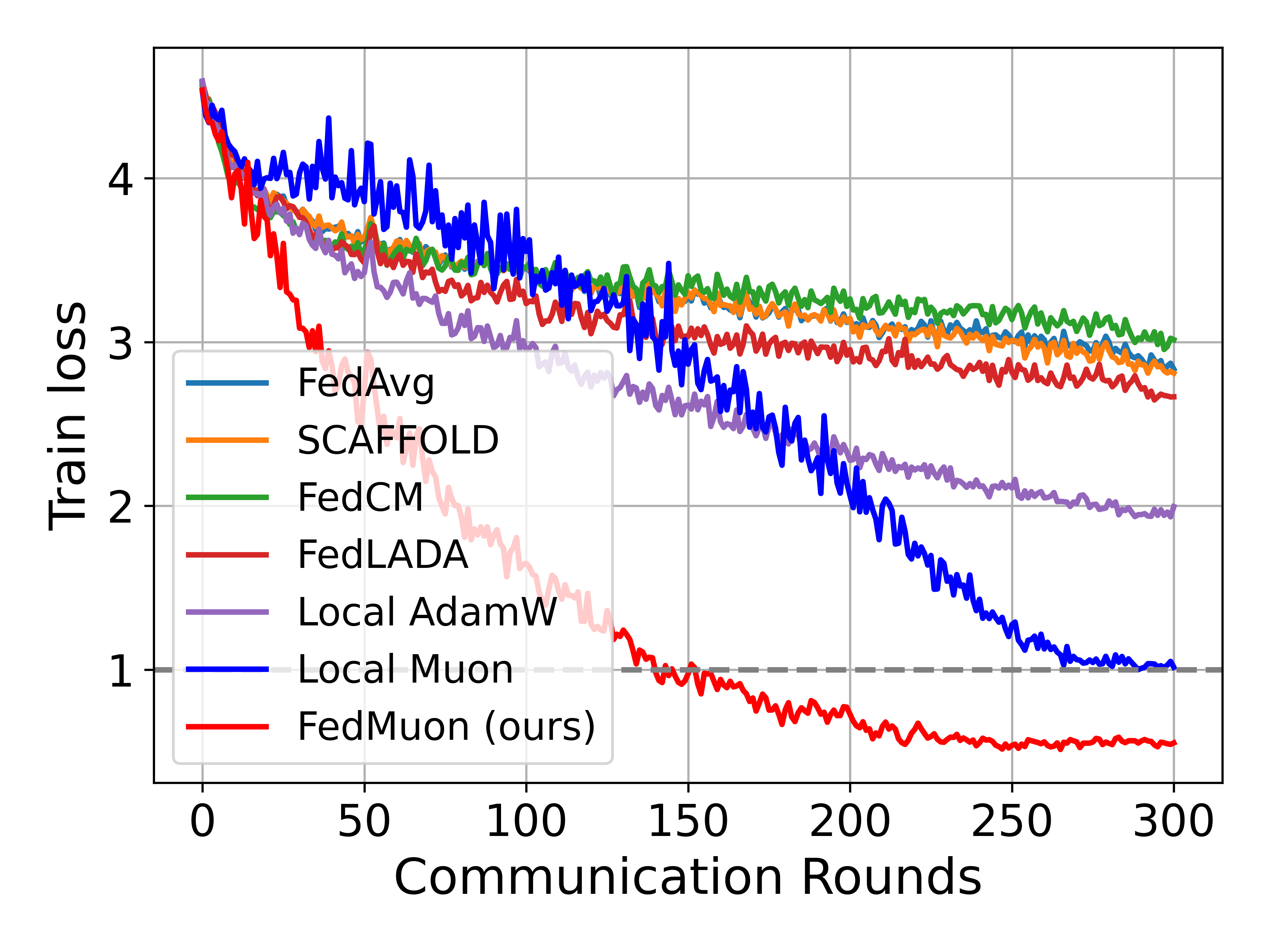}}
	\end{minipage}
		\caption{\small Training loss and Test acc curves on CIFAR-100 using ResNet-18 and ViT-Tiny in Dir-0.1.}
		\label{fig:resnet18}
	\end{figure*}

	\begin{table*}[tb]
		\centering
		\caption{\small Test accuracy, training loss of each method on CIFAR-100 using \textbf{ResNet-18} and \textbf{ViT-Tiny} over 300 communication rounds under Dir-0.6 and Dir-0.1 (100 clients, 10\% participation, batch size 50, $K=50$).}
		\label{tab:combined_cifar100}
		\setlength{\tabcolsep}{2pt}
		\begin{tabular}{lccccccccc}
			\toprule
			\multirow{2}{*}{\textbf{Method}} 
			& \multicolumn{2}{c}{\textbf{ResNet-18 (Dir-0.6)}} 
			& \multicolumn{2}{c}{\textbf{ResNet-18 (Dir-0.1)}}
			& \multicolumn{2}{c}{\textbf{ViT-Tiny (Dir-0.6)}} 
			& \multicolumn{2}{c}{\textbf{ViT-Tiny (Dir-0.1)}}  \\
			\cmidrule(lr){2-3} \cmidrule(lr){4-5} 
			\cmidrule(lr){6-7} \cmidrule(lr){8-9}
			& Test Acc & Loss & Test Acc & Loss  
			& Test Acc & Loss   & Test Acc & Loss  \\
			\midrule
			FedAvg         & $64.08_{\pm0.18}$ & 0.376 & $60.25_{\pm0.20}$ & 0.767 & $32.36_{\pm0.08}$ & 2.350 & $27.14_{\pm0.12}$ & 2.867 \\
			SCAFFOLD       & $65.01_{\pm0.19}$ & 0.365 & $59.37_{\pm0.16}$ &0.814& $32.17_{\pm0.12}$ & 2.295& $27.31_{\pm0.11}$ & 2.855 \\
			FedCM          & $70.42_{\pm0.11}$ & 0.282 & $66.73_{\pm0.14}$ & 0.639 & $26.33_{\pm0.12}$ & 2.681 & $23.18_{\pm0.12}$ & 3.038 \\
			FedLADA        & $65.07_{\pm0.17}$ & 0.671 & $57.78_{\pm0.18}$ & 0.498 & $38.33_{\pm0.12}$ & 2.121& $31.50_{\pm0.12}$ & 2.678 \\
			Local AdamW    & $62.84_{\pm0.08}$ &0.363& $58.97_{\pm0.10}$ & 0.794 & $40.47_{\pm0.09}$ & 1.026& $37.86_{\pm0.11}$ & 1.954  \\
			\rowcolor{LightBlue}
			\texttt{Local Muon}    & $71.66_{\pm0.15}$ & 0.395 &$66.71_{\pm0.15}$ & 1.504& $46.69_{\pm0.12}$ & 0.201 & $40.53_{\pm0.12}$ & 1.432  \\
			\rowcolor{LightRed}
			\texttt{FedMuon} 
			&$\mathbf{74.12_{\pm0.15}}$ & 0.001 
			& $\mathbf{73.05_{\pm0.15}}$ &0.246 
			& $\mathbf{50.22_{\pm0.12}}$ & 0.162 
			& $\mathbf{48.18_{\pm0.12}}$  & 0.556  \\
			\bottomrule
		\end{tabular}
	\end{table*}

	\section{Experiments}
	\vspace{-2mm}
	\textbf{Datasets.} We evaluate \texttt{FedMuon} on both vision and language tasks. (\textit{i}) For image classification, we use CIFAR-100~\citep{krizhevsky2009learning}, and Tiny ImageNet~\citep{le2015tiny}. (\textit{ii}) For NLP tasks, we adopt benchmark datasets from the GLUE benchmark, including SST-2~\citep{socher2013recursive}, QQP~\citep{socher2013recursive}. To simulate data heterogeneity across clients, we follow the Dirichlet partitioning scheme~\citep{hsu2019measuring}, where a Dir-0.6 corresponds to a low heterogeneity and Dir-0.1 implies high heterogeneity.\\
	\textbf{Model Architectures.} We explore a variety of model types: (\textit{i}) ResNet-18~\citep{he2016deep} as a representative convolutional neural network (CNN), (\textit{ii}) Swin Transformer~\citep{liu2021swin} and ViT-Tiny~\citep{dosovitskiy2020image} for Vision Transformers, and (\textit{iii}) RoBERTa-Base~\citep{liu2019roberta} for large-scale language model.\\
	\textbf{Baselines.} We compare our method against  state-of-the-art FL algorithms: \texttt{FedAvg} (\texttt{Local SGD})~\citep{mcmahan2017communication}, \texttt{SCAFFOLD}~\citep{karimireddy2020scaffold}, \texttt{FedCM}~\citep{xu2021fedcm}, \texttt{FedLADA}~\citep{sun2023efficient}, \texttt{Local AdamW} and  \texttt{Local Muon}.\\
	\textbf{Hyperparameter Settings.} For \texttt{FedAvg}, \texttt{SCAFFOLD}, \texttt{FedCM}, the $lr$ is selected from 
	$\{10^{-2},\ 3 \times 10^{-2},\ 5 \times 10^{-2},\ 10^{-1},\ 3 \times 10^{-1}\}$, 
	with a weight decay of $0.001$.
	For \texttt{FedLADA}, \texttt{Local AdamW}, the $lr$ is selected from 
	$\{10^{-4},\ 3 \times 10^{-4},\ 5 \times 10^{-4},\ 8 \times 10^{-4},\ 10^{-3}\}$, 
	with  weight decay  $0.01$ or $0.001$, $\beta_1 = 0.9$, $\beta_2 = 0.999$. We apply cosine learning rate decay, and set \texttt{FedMuon} to \textbf{$\boldsymbol{\alpha}\!=\!0.5$}, weight decay $0.01$. We set the learning rate of \texttt{FedMuon} and \texttt{Local Muon} to be $ 3 \times 10^{-2},\ 2 \times 10^{-2},\ 3 \times 10^{-3} $. Additional hyperparameter configurations are detailed in the \textbf{Appendix}. We release all code, configuration files to ensure full reproducibility. All results are averaged over 5 runs with std reported with seeds {42, 43, 44, 45, 46}.
	

	\subsection{Results on Convolutional Neural Networks and Transformer}
	
\paragraph{Training on CIFAR-100 with ResNet-18.}
\textbf{Table~\ref{tab:combined_cifar100}} and \textbf{Figure~\ref{fig:resnet18}} present the test accuracy and training loss on CIFAR-100 using ResNet-18. \texttt{FedMuon} achieves the best performance under both Dir-0.6 and Dir-0.1 settings, reaching a top accuracy of \textbf{74.12\%} and \textbf{73.05\%}, respectively. It also attains the lowest training loss (\textbf{0.001} and \textbf{0.246}), demonstrating faster and more stable convergence. Compared to other adaptive baselines such as  \texttt{Local AdamW}, \texttt{FedMuon} shows superior generalization under data heterogeneity, confirming its effectiveness in CNNs.
	
\textbf{Training on CIFAR-100 with ViT-Tiny.}
\textbf{Table~\ref{tab:combined_cifar100}} and \textbf{Figure~\ref{fig:resnet18}} show
\texttt{FedMuon} achieves the best performance across both heterogeneity levels, with  \textbf{50.22\%} (Dir-0.6) and \textbf{48.18\%} (Dir-0.1), and the lowest training loss (\textbf{0.162} and \textbf{0.556}), confirming its efficient convergence. Compared to \texttt{Local AdamW}, it provides consistent improvements in both accuracy and stability. Moreover, other adaptive baselines such as  \texttt{FedLADA} perform significantly worse under high heterogeneity, highlighting the effectiveness of global update correction and decoupled weight decay.
These results validate that \texttt{FedMuon} is particularly effective for federated vision Transformers under non-i.i.d. conditions.
The small dataset CIFAR100 is difficult to support the performance of ViT, resulting in lower accuracy. Therefore, we continued to test on the pretrained model.\\
	
	\begin{wraptable}{r}{0.54\textwidth}
		\centering
		\setlength{\tabcolsep}{1pt}          
		\renewcommand{\arraystretch}{1.15}
		\captionof{table}{\small Comparison of test accuracy and training loss for \textbf{Swin Transformer} under Dir-0.1 with 100  rounds (100 clients, 5\% participation, batch size 16, $K=50$).}
		\vspace{-2mm}
		\label{tab:swin_results}
		\begin{tabular}{lcccc}
			\toprule
			\multirow{2}{*}{\textbf{Method}} 
			& \multicolumn{2}{c}{\textbf{CIFAR-100}} 
			& \multicolumn{2}{c}{\textbf{Tiny ImageNet}} \\
			\cmidrule(lr){2-3} \cmidrule(lr){4-5}
			& Test Acc &Loss & Test Acc & Loss \\
			\midrule
			FedAvg         & $80.02_{\pm0.28}$  & 0.588 & $80.38_{\pm0.22}$  & 0.826 \\
			SCAFFOLD       & $81.30_{\pm0.18}$  & 0.514 & $82.41_{\pm0.18}$  & 0.650 \\
			FedCM          & $82.38_{\pm0.11}$  & 0.565 & $83.18_{\pm0.14}$  & 0.522\\
			FedLADA        & $74.64_{\pm0.15}$  & 0.598 &$70.94_{\pm0.19}$  & 0.944 \\
			Local AdamW     & $83.35_{\pm0.16}$  &0.381 & $80.26_{\pm0.12}$  & 0.686 \\
			\rowcolor{LightBlue}
			\texttt{Local Muon}     & $79.73_{\pm0.18}$ & 0.396 & $80.24_{\pm0.10}$            & 0.734 \\
			\rowcolor{LightRed}
			\texttt{FedMuon} &$\mathbf{84.88_{\pm0.17}}$  &$\mathbf{ 0.123}$ & $\mathbf{84.95_{\pm0.12}}$ &$\mathbf{ 0.394}$ \\
			\bottomrule
		\end{tabular}
	\end{wraptable}
	
	\textbf{Fine-tuning Results on Swin Transformer.}  
	\textbf{Table~\ref{tab:swin_results}} reports results on Swin Transformer under Dir-0.1. \texttt{FedMuon} achieves the highest test accuracy on both CIFAR-100 (\textbf{84.88\%}) and Tiny ImageNet (\textbf{84.95\%}), while also attaining the lowest training loss, reflecting faster convergence. FedMuon consistently outperforms baselines (including Local AdamW and Local Muon), demonstrating its effectiveness in fine-tuning Vision Transformer models under non-IID data.\\
	\textbf{Fine-tuning Results on LLMs.} 
	\textbf{Table~\ref{tab:roberta_base_glue}} summarizes results on the GLUE benchmark using RoBERTa-Base with LoRA, 4 clients, 100\% participation, batch size 16, $K=50$, rank$=$16. \texttt{FedMuon} achieves the highest  accuracy of GLUE outperforming strong baselines such as \texttt{FedAvg} and \texttt{Local Muon}. It is particularly strong on challenging tasks like \textbf{RTE} and \textbf{QQP}, exceeding the next best methods by \textbf{+5.64\%} and \textbf{+0.62\%}, respectively.

\begin{table*}[tb]
	\centering
	\setlength{\tabcolsep}{1pt}
	\caption{\small Test accuracy using RoBERTa-Base with LoRA across GLUE tasks over 100 communication rounds. }
	\vspace{-2mm}
	\label{tab:roberta_base_glue}
	\small
	\begin{tabular}{lccccccc}
		\toprule
		\textbf{Method (Dir-0.8)} 
		& \textbf{CoLA} & \textbf{RTE} & \textbf{SST-2} & \textbf{QQP}
		& \textbf{MRPC} & \textbf{QNLI} & \textbf{MNLI} \\
		\midrule
		FedAvg           
		& $57.32_{\pm0.22}$ & $62.71_{\pm0.35}$ & $93.32_{\pm0.08}$ 
		& $84.13_{\pm0.15}$ & $87.02_{\pm0.19}$ & $90.19_{\pm0.12}$ 
		& $84.18_{\pm0.21}$  \\
		
		SCAFFOLD         
		& $58.14_{\pm0.25}$ & $63.62_{\pm0.28}$ & $93.54_{\pm0.09}$
		& $84.62_{\pm0.17}$ & $87.56_{\pm0.22}$ & $90.26_{\pm0.11}$
		& $84.26_{\pm0.20}$ \\
		
		FedCM            
		& $58.14_{\pm0.27}$ & $66.14_{\pm0.31}$ & $93.61_{\pm0.07}$
		& $84.56_{\pm0.18}$ & $87.11_{\pm0.16}$ & $90.08_{\pm0.13}$ 
		& $84.32_{\pm0.23}$  \\
		
		FedLADA          
		& $59.10_{\pm0.21}$ & $74.14_{\pm0.29}$ & $93.66_{\pm0.10}$
		& $84.86_{\pm0.16}$ & $87.42_{\pm0.18}$ & $90.18_{\pm0.14}$ 
		& $84.46_{\pm0.19}$  \\
		
		Local AdamW      
		& $59.33_{\pm0.23}$ & $74.04_{\pm0.27}$ & $93.55_{\pm0.11}$ 
		& $84.68_{\pm0.15}$ & $87.16_{\pm0.20}$ & $90.11_{\pm0.12}$ 
		& $84.54_{\pm0.18}$  \\
		
		\rowcolor{LightBlue}
		\texttt{Local Muon}      
		& $60.16_{\pm0.20}$ & $71.48_{\pm0.34}$ & $93.34_{\pm0.09}$
		& $85.11_{\pm0.13}$ & $87.45_{\pm0.21}$ & $90.97_{\pm0.15}$ 
		& $84.59_{\pm0.17}$ \\
		
		\rowcolor{LightRed}
		\texttt{FedMuon (ours)} 
		& $\mathbf{63.04_{\pm0.19}}$ & $\mathbf{77.12_{\pm0.30}}$ & $\mathbf{94.12_{\pm0.08}}$
		& $\mathbf{85.73_{\pm0.14}}$ & $\mathbf{88.23_{\pm0.17}}$ & $\mathbf{91.43_{\pm0.10}}$  
		& $\mathbf{85.05_{\pm0.16}}$   \\
		\bottomrule
	\end{tabular}
\end{table*}

\subsection{Ablation Study}
\vspace{-2mm}
\textbf{Impact of $\Delta_G$ and $\bar{m}$.}	As shown in Table~\ref{tab:ablation} left, we conduct an ablation study of \texttt{FedMuon}. 
	\texttt{FedMuon} incorporates  momentum averaging $\bar{m}$ and global update differences $\Delta_G$. 
	The results clearly indicate that \texttt{Local Muon} consistently outperforms both SGD and AdamW, 
	demonstrating its superior ability to handle non-IID FL. Moreover, our strategy consistently improves the performance of other optimizers as well.

\textbf{Impact of $\Delta_G$ and $\bar{m}$ on other optimizers.}
As shown in Table~\ref{tab:ablation} right, we compare different local optimizers with $\Delta_G$ and $\bar{m}$. 
The results demonstrate that \texttt{Local Muon} consistently achieves the best performance, significantly outperforming SGD and AdamW, thereby highlighting its effectiveness in mitigating data heterogeneity.
	
		\begin{table*}[tb]
		\centering
		\caption{\small Ablation study of \texttt{FedMuon} on CIFAR-100 (Dir-0.1, 300 rounds). 
			Left: effect of removing components. Right: effect of different local optimizers.}
		
		\label{tab:ablation}
		\setlength{\tabcolsep}{4pt}
		\begin{minipage}{0.47\textwidth}
			\centering
			\begin{tabular}{lcc}
				\toprule
				\textbf{Variant} & \textbf{ResNet-18} & \textbf{ViT-Tiny} \\
				\midrule
				A1: w/o $\boldsymbol{\bar{m}}$        & $69.12_{\pm0.18}$ & $43.67_{\pm0.19}$ \\
				A2: w/o $\boldsymbol{\Delta}_G$       & $68.05_{\pm0.10}$ & $44.56_{\pm0.16}$ \\
				\rowcolor{LightRed}
				A3: \texttt{FedMuon}                  & $\mathbf{73.05_{\pm0.15}}$ &$\mathbf{48.18_{\pm0.12}}$\\
				\bottomrule
			\end{tabular}
		\end{minipage}
		\begin{minipage}{0.52\textwidth}
			\centering
			\setlength{\tabcolsep}{2pt}
			\begin{tabular}{lcc}
				\toprule
				\textbf{Variant} & \textbf{ResNet-18} & \textbf{ViT-Tiny} \\
				\midrule
				Local SGD + $\boldsymbol{\bar{m}}$+$\boldsymbol{\Delta}_G$   & $66.28_{\pm0.17}$ & $32.56_{\pm0.11}$\\
				Local AdamW + $\boldsymbol{\bar{m}}$+$\boldsymbol{\Delta}_G$ & $64.25_{\pm0.12}$ & $41.26_{\pm0.17}$ \\
				\rowcolor{LightRed}
				Local Muon + $\boldsymbol{\bar{m}}$+$\boldsymbol{\Delta}_G$  & $\mathbf{73.05_{\pm0.15}}$ &$\mathbf{48.18_{\pm0.12}}$ \\
				\bottomrule
			\end{tabular}
		\end{minipage}
	\end{table*}

	\begin{table}[tb]
		\centering
		\caption{\small Impact of $\alpha$ and $\beta$ on \texttt{FedMuon} using ViT-Tiny and ResNet-18  on CIFAR-100 (Dir-0.1).}
		\vspace{-2mm}
		\label{tab:alpha_beta_ablation}
		\setlength{\tabcolsep}{5pt}
		\begin{tabular}{l|ccccc|ccccc}
			\toprule
			\rowcolor{LightRed}
			& \multicolumn{5}{c|}{\textbf{$\alpha$}} & \multicolumn{5}{c}{\textbf{$\beta$}} \\
			\cmidrule(lr){2-6} \cmidrule(lr){7-11}
			Model & 0.00 & 0.25 & \textbf{0.50} & 0.75 & 0.90 & 0.80 & 0.90 & 0.95 & \textbf{0.98} & 0.99 \\
			\midrule
			\textbf{ResNet-18} & $68.05$ & $69.89$ & $\mathbf{73.01}$& $72.12$ & $67.56$ & $68.22$ & $70.56$ & $71.23$ & $\mathbf{73.01}$ & $72.66$ \\
			\textbf{ViT-Tiny}       & $44.56$     & $46.28$    & $\mathbf{48.18}$    & $47.59$     & $46.23$     & $44.86$     & $45.23$     & $46.59$     & $\mathbf{48.18}$      & $47.56$ \\
			\bottomrule
		\end{tabular}
	\end{table}

\textbf{Impact of $\alpha$.}  
\textbf{Table~\ref{tab:alpha_beta_ablation}} evaluates the effect of the local-global alignment parameter $\alpha$ in \texttt{FedMuon}. As shown by  \textbf{Theorem~1}, incorporating global update direction helps suppress client drift and accelerates convergence. We observe that $\alpha=0.5$ yields the best performance, striking a balance between local adaptivity and global consistency, in line with our theoretical insight.

\textbf{Impact of $\beta$.}  
\textbf{Table~\ref{tab:alpha_beta_ablation}} verifies the effectiveness of \emph{local momentum accumulation}. When the momentum parameter \(\beta\) is too small, the aggregated global momentum is quickly diluted. Conversely, an overly large \(\beta\) slows local gradient accumulation and delays responsiveness to new data. These results suggest that \(\beta\) should balance global momentum preservation with timely adaptation to client updates. We observe that $\beta=0.98$ yields the best performance.

	\begin{table}[H]
		\centering
		\caption{\small Ablation study of momentum aggregation strategies of \texttt{FedMuon} on CIFAR-100 under Dir-0.1.}
		\vspace{-2mm}
		\label{tab:avg}
		\setlength{\tabcolsep}{4pt}
		\begin{tabular}{lccc}
			\toprule
			\textbf{Aggregation Strategy} & \textbf{ResNet-18} & \textbf{ViT-Tiny}   & \textbf{Comm Cost($\downarrow$)} \\
			\midrule
			NoAgg      & 69.12$_{\pm0.15}$ &43.67$_{\pm0.15}$ & 1$\times$  \\
			Agg-$\boldsymbol{m}$                   & 73.05$_{\pm0.15}$ &  48.18$_{\pm0.13}$ & 2$\times$  \\
			Agg-$\boldsymbol{m}$-SVD     & $72.56_{\pm0.12}$ & $47.66_{\pm0.12}$ & 1.05$\times$  \\          
			
			\bottomrule
		\end{tabular}
	\end{table}

\textbf{Impact of Momentum Aggregation Strategy.} \textbf{Table \ref{tab:avg}} shows Momentum Aggregation Strategy, \texttt{Agg-$\boldsymbol{m}$-SVD} (\texttt{FedMuon\_SVD}), achieves the best balance between accuracy and communication cost. While \texttt{Agg-$\boldsymbol{m}$} improves performance, it introduces excessive communication (2$\times$). In contrast, \texttt{Agg-$\boldsymbol{m}$-SVD}  attains similar benefits with only 1.05$\times$ communication cost.

\section{Conclusion}
\vspace{-2mm}		
In this work, we proposed \texttt{FedMuon}, a structure-aware federated optimizer for training large-scale Transformer and vision models. \texttt{FedMuon} addresses core challenges of non-IID. Federated learning—client drift, unstable optimizer states, and inefficient communication—by coupling \emph{matrix-orthogonalized} local updates with \emph{local-global alignment} and \emph{cross-round momentum aggregation}, complemented by low-rank state sharing. We provided non-convex convergence analysis clarifying how alignment and orthogonalization jointly control the bias introduced by multi-step local training, and we documented strong empirical gains across vision and language tasks, particularly on Transformer architectures. These results highlight that treating optimizer updates as matrices (rather than flat vectors) offers a principled route to reliable and efficient FL. We believe \texttt{FedMuon} opens a pathway for adapting modern, structure-aware optimizers to federated settings and inspires future extensions to related methods such as LAMB \citep{chen2023symbolic} or Lion \citep{chen2023symbolic}. Beyond federated learning, the principles of \texttt{FedMuon} can be directly applied to large-scale distributed training and parameter-efficient fine-tuning of foundation models, where communication efficiency and stable optimization are equally critical.

\newpage

\section{Reproducibility Statement}
We make every effort to ensure reproducibility. The paper specifies training steps, model configurations (e.g., ResNet/ViT for vision and RoBERTa-style encoders for NLP), non-IID partition protocols, client sampling, and hardware details. Unless noted otherwise, each configuration is repeated with five independent seeds $\{42,43,44,45,46\}$; we report mean~$\pm$~standard deviation and provide per-run logs/curves. Implementation details for \textsc{FedMuon} (orthogonalized updates, global--local alignment, cross-round momentum aggregation, and low-rank SVD compression) are described in algorithmic form with all tunables exposed. An anonymous repository includes source code, configuration files, data-partition scripts, and instructions to exactly reproduce the main tables and figures.


\bibliography{main}

\begin{thebibliography}{30}
\providecommand{\natexlab}[1]{#1}
\providecommand{\url}[1]{\texttt{#1}}
\expandafter\ifx\csname urlstyle\endcsname\relax
  \providecommand{\doi}[1]{doi: #1}\else
  \providecommand{\doi}{doi: \begingroup \urlstyle{rm}\Url}\fi

\bibitem[Bernstein \& Newhouse(2024)Bernstein and Newhouse]{bernstein2024modular}
Jeremy Bernstein and Laker Newhouse.
\newblock Modular duality in deep learning.
\newblock \emph{arXiv preprint arXiv:2410.21265}, 2024.

\bibitem[Chen et~al.(2023)Chen, Liang, Huang, Real, Wang, Pham, Dong, Luong, Hsieh, Lu, et~al.]{chen2023symbolic}
Xiangning Chen, Chen Liang, Da~Huang, Esteban Real, Kaiyuan Wang, Hieu Pham, Xuanyi Dong, Thang Luong, Cho-Jui Hsieh, Yifeng Lu, et~al.
\newblock Symbolic discovery of optimization algorithms.
\newblock \emph{Advances in neural information processing systems}, 36:\penalty0 49205--49233, 2023.

\bibitem[Collobert(2004)]{collobert2004large}
Ronan Collobert.
\newblock Large scale machine learning.
\newblock \emph{Idiap Res. Inst., Martigny, Switzerland, RR-04-42}, 2004.

\bibitem[Dosovitskiy et~al.(2020)Dosovitskiy, Beyer, Kolesnikov, Weissenborn, Zhai, Unterthiner, Dehghani, Minderer, Heigold, Gelly, et~al.]{dosovitskiy2020image}
Alexey Dosovitskiy, Lucas Beyer, Alexander Kolesnikov, Dirk Weissenborn, Xiaohua Zhai, Thomas Unterthiner, Mostafa Dehghani, Matthias Minderer, Georg Heigold, Sylvain Gelly, et~al.
\newblock An image is worth 16x16 words: Transformers for image recognition at scale.
\newblock \emph{arXiv preprint arXiv:2010.11929}, 2020.

\bibitem[Duchi et~al.(2011)Duchi, Hazan, and Singer]{duchi2011adaptive}
John Duchi, Elad Hazan, and Yoram Singer.
\newblock Adaptive subgradient methods for online learning and stochastic optimization.
\newblock \emph{Journal of machine learning research}, 12\penalty0 (7), 2011.

\bibitem[Fan et~al.(2024)Fan, Hu, Yao, Niu, Zhang, Sugiyama, and Wang]{fan2024locally}
Ziqing Fan, Shengchao Hu, Jiangchao Yao, Gang Niu, Ya~Zhang, Masashi Sugiyama, and Yanfeng Wang.
\newblock Locally estimated global perturbations are better than local perturbations for federated sharpness-aware minimization.
\newblock \emph{arXiv preprint arXiv:2405.18890}, 2024.

\bibitem[He et~al.(2016)He, Zhang, Ren, and Sun]{he2016deep}
Kaiming He, Xiangyu Zhang, Shaoqing Ren, and Jian Sun.
\newblock Deep residual learning for image recognition.
\newblock In \emph{Proceedings of the IEEE conference on computer vision and pattern recognition}, pp.\  770--778, 2016.

\bibitem[Hsu et~al.(2019)Hsu, Qi, and Brown]{hsu2019measuring}
Tzu-Ming~Harry Hsu, Hang Qi, and Matthew Brown.
\newblock Measuring the effects of non-identical data distribution for federated visual classification.
\newblock \emph{arXiv preprint arXiv:1909.06335}, 2019.

\bibitem[Jordan et~al.()Jordan, Jin, Boza, Jiacheng, Cecista, Newhouse, and Bernstein]{jordan6muon}
Keller Jordan, Yuchen Jin, Vlado Boza, You Jiacheng, Franz Cecista, Laker Newhouse, and Jeremy Bernstein.
\newblock Muon: An optimizer for hidden layers in neural networks, 2024.
\newblock \emph{URL https://kellerjordan. github. io/posts/muon}, 6.

\bibitem[Karimireddy et~al.(2020)Karimireddy, Kale, Mohri, Reddi, Stich, and Suresh]{karimireddy2020scaffold}
Sai~Praneeth Karimireddy, Satyen Kale, Mehryar Mohri, Sashank Reddi, Sebastian Stich, and Ananda~Theertha Suresh.
\newblock Scaffold: Stochastic controlled averaging for federated learning.
\newblock In \emph{International Conference on Machine Learning}, pp.\  5132--5143. PMLR, 2020.

\bibitem[Kingma \& Ba(2014)Kingma and Ba]{kingma2014adam}
Diederik~P Kingma and Jimmy Ba.
\newblock Adam: A method for stochastic optimization.
\newblock \emph{arXiv preprint arXiv:1412.6980}, 2014.

\bibitem[Krizhevsky et~al.(2009)Krizhevsky, Hinton, et~al.]{krizhevsky2009learning}
Alex Krizhevsky, Geoffrey Hinton, et~al.
\newblock Learning multiple layers of features from tiny images.
\newblock 2009.

\bibitem[Large et~al.(2024)Large, Liu, Huh, Bahng, Isola, and Bernstein]{large2024scalable}
Tim Large, Yang Liu, Minyoung Huh, Hyojin Bahng, Phillip Isola, and Jeremy Bernstein.
\newblock Scalable optimization in the modular norm.
\newblock \emph{Advances in Neural Information Processing Systems}, 37:\penalty0 73501--73548, 2024.

\bibitem[Le \& Yang(2015)Le and Yang]{le2015tiny}
Ya~Le and Xuan Yang.
\newblock Tiny imagenet visual recognition challenge.
\newblock \emph{CS 231N}, 7\penalty0 (7):\penalty0 3, 2015.

\bibitem[Li et~al.(2020)Li, Sahu, Zaheer, Sanjabi, Talwalkar, and Smith]{li2018federated}
Tian Li, Anit~Kumar Sahu, Manzil Zaheer, Maziar Sanjabi, Ameet Talwalkar, and Virginia Smith.
\newblock Federated optimization in heterogeneous networks.
\newblock In \emph{Proceedings of Machine Learning and Systems}, 2020.

\bibitem[Liu et~al.(2025)Liu, Su, Yao, Jiang, Lai, Du, Qin, Xu, Lu, Yan, et~al.]{liu2025muon}
Jingyuan Liu, Jianlin Su, Xingcheng Yao, Zhejun Jiang, Guokun Lai, Yulun Du, Yidao Qin, Weixin Xu, Enzhe Lu, Junjie Yan, et~al.
\newblock Muon is scalable for llm training.
\newblock \emph{arXiv preprint arXiv:2502.16982}, 2025.

\bibitem[Liu et~al.(2024)Liu, Shang, Liu, Liu, Li, and Gong]{liu2024fedbcgd}
Junkang Liu, Fanhua Shang, Yuanyuan Liu, Hongying Liu, Yuangang Li, and YunXiang Gong.
\newblock Fedbcgd: Communication-efficient accelerated block coordinate gradient descent for federated learning.
\newblock In \emph{Proceedings of the 32nd ACM International Conference on Multimedia}, pp.\  2955--2963, 2024.

\bibitem[Liu et~al.(2019)Liu, Ott, Goyal, Du, Joshi, Chen, Levy, Lewis, Zettlemoyer, and Stoyanov]{liu2019roberta}
Yinhan Liu, Myle Ott, Naman Goyal, Jingfei Du, Mandar Joshi, Danqi Chen, Omer Levy, Mike Lewis, Luke Zettlemoyer, and Veselin Stoyanov.
\newblock Roberta: A robustly optimized bert pretraining approach.
\newblock \emph{arXiv preprint arXiv:1907.11692}, 2019.

\bibitem[Liu et~al.(2021)Liu, Lin, Cao, Hu, Wei, Zhang, Lin, and Guo]{liu2021swin}
Ze~Liu, Yutong Lin, Yue Cao, Han Hu, Yixuan Wei, Zheng Zhang, Stephen Lin, and Baining Guo.
\newblock Swin transformer: Hierarchical vision transformer using shifted windows.
\newblock In \emph{Proceedings of the IEEE/CVF international conference on computer vision}, pp.\  10012--10022, 2021.

\bibitem[Loshchilov et~al.(2017)Loshchilov, Hutter, et~al.]{loshchilov2017fixing}
Ilya Loshchilov, Frank Hutter, et~al.
\newblock Fixing weight decay regularization in adam.
\newblock \emph{arXiv preprint arXiv:1711.05101}, 5\penalty0 (5):\penalty0 5, 2017.

\bibitem[McMahan et~al.(2017)McMahan, Moore, Ramage, Hampson, and y~Arcas]{mcmahan2017communication}
Brendan McMahan, Eider Moore, Daniel Ramage, Seth Hampson, and Blaise~Aguera y~Arcas.
\newblock Communication-efficient learning of deep networks from decentralized data.
\newblock In \emph{Artificial intelligence and statistics}, pp.\  1273--1282. PMLR, 2017.

\bibitem[Pethick et~al.(2025)Pethick, Xie, Antonakopoulos, Zhu, Silveti-Falls, and Cevher]{pethick2025training}
Thomas Pethick, Wanyun Xie, Kimon Antonakopoulos, Zhenyu Zhu, Antonio Silveti-Falls, and Volkan Cevher.
\newblock Training deep learning models with norm-constrained lmos.
\newblock \emph{arXiv preprint arXiv:2502.07529}, 2025.

\bibitem[Reddi et~al.(2020)Reddi, Charles, Zaheer, Garrett, Rush, Kone{\v{c}}n{\`y}, Kumar, and McMahan]{reddi2020adaptive}
Sashank Reddi, Zachary Charles, Manzil Zaheer, Zachary Garrett, Keith Rush, Jakub Kone{\v{c}}n{\`y}, Sanjiv Kumar, and H~Brendan McMahan.
\newblock Adaptive federated optimization.
\newblock \emph{arXiv preprint arXiv:2003.00295}, 2020.

\bibitem[Reddi et~al.(2019)Reddi, Kale, and Kumar]{reddi2019convergence}
Sashank~J Reddi, Satyen Kale, and Sanjiv Kumar.
\newblock On the convergence of adam and beyond.
\newblock \emph{arXiv preprint arXiv:1904.09237}, 2019.

\bibitem[Shah et~al.(2025)Shah, Polloreno, Stratos, Monk, Chaluvaraju, Hojel, Ma, Thomas, Tanwer, Shah, et~al.]{shah2025practical}
Ishaan Shah, Anthony~M Polloreno, Karl Stratos, Philip Monk, Adarsh Chaluvaraju, Andrew Hojel, Andrew Ma, Anil Thomas, Ashish Tanwer, Darsh~J Shah, et~al.
\newblock Practical efficiency of muon for pretraining.
\newblock \emph{arXiv preprint arXiv:2505.02222}, 2025.

\bibitem[Socher et~al.(2013)Socher, Perelygin, Wu, Chuang, Manning, Ng, and Potts]{socher2013recursive}
Richard Socher, Alex Perelygin, Jean Wu, Jason Chuang, Christopher~D Manning, Andrew~Y Ng, and Christopher Potts.
\newblock Recursive deep models for semantic compositionality over a sentiment treebank.
\newblock In \emph{Proceedings of the 2013 conference on empirical methods in natural language processing}, pp.\  1631--1642, 2013.

\bibitem[Sun et~al.(2023)Sun, Shen, Sun, Ding, and Tao]{sun2023efficient}
Yan Sun, Li~Shen, Hao Sun, Liang Ding, and Dacheng Tao.
\newblock Efficient federated learning via local adaptive amended optimizer with linear speedup.
\newblock \emph{IEEE Transactions on Pattern Analysis and Machine Intelligence}, 45\penalty0 (12):\penalty0 14453--14464, 2023.

\bibitem[Xu et~al.(2021)Xu, Wang, Wang, and Yao]{xu2021fedcm}
Jing Xu, Sen Wang, Liwei Wang, and Andrew Chi-Chih Yao.
\newblock Fedcm: Federated learning with client-level momentum.
\newblock \emph{arXiv preprint arXiv:2106.10874}, 2021.

\bibitem[Zeiler(2012)]{zeiler2012adadelta}
Matthew~D Zeiler.
\newblock Adadelta: an adaptive learning rate method.
\newblock \emph{arXiv preprint arXiv:1212.5701}, 2012.

\bibitem[Zhang et~al.(2024)Zhang, Chen, Li, Ding, Wu, Kingma, Ye, Luo, and Sun]{zhang2024adam}
Yushun Zhang, Congliang Chen, Ziniu Li, Tian Ding, Chenwei Wu, Diederik~P Kingma, Yinyu Ye, Zhi-Quan Luo, and Ruoyu Sun.
\newblock Adam-mini: Use fewer learning rates to gain more.
\newblock \emph{arXiv preprint arXiv:2406.16793}, 2024.

\end{thebibliography}
\bibliographystyle{iclr2026_conference}

\newpage

\section{LLM Usage}
Large Language Models (LLMs) were used solely for language editing (grammar, phrasing, and clarity) of the manuscript text. LLMs were \emph{not} involved in research ideation, methodological design, theoretical analysis, dataset preparation, implementation, or result selection. The authors are fully responsible for the scientific content and verify that any LLM-assisted passages comply with ethical guidelines and do not constitute plagiarism or scientific misconduct.

\appendix


\onecolumn
\appendix

\section{Appendix A: Proof of Theorem 1 and Convergence Analysis}
\label{convergence_analysis}

\begin{algorithm}[tb]
	\caption{\texttt{FedMuon} Algorithm}
	\begin{algorithmic}[1]
		\STATE {\textbf{Initial} model $\boldsymbol{x}^0$, $\beta_1=0.98$, time step $t \leftarrow 0$, the number of all clients $N$, each round selected clients $S$, weight decay $\lambda$}.
		\FOR{$r = 1, \dots, R$}
		\FOR{each selected client $i \in \{1, \dots, S\}$ in parallel}
		\STATE $\boldsymbol{x}_{i}^{r,0} \gets \boldsymbol{x}^r$, $\boldsymbol{M}^{r,0}_{i} \gets \boldsymbol{\bar{M}}^{r}$;
		\FOR{$k = 1, \dots, K$}
		\STATE $\boldsymbol{G}^{r,k}_i\gets\nabla f_i(\boldsymbol{x}_i^{r, k} ; \xi_i)$;
		\STATE $\boldsymbol{M}^{r,k}_i = \beta \boldsymbol{M}^{r,k}_i + \boldsymbol{G}^{r,k}_i$;
		\STATE $\boldsymbol{U}^{r,k}_i, \boldsymbol{\Sigma}^{r,k}_i, \boldsymbol{V}^{r,k}_i = \mathrm{SVD}(\boldsymbol{M}^{r,k}_i)$;
		\STATE$\boldsymbol{x}^{r,k+1}_i =\boldsymbol{x}^{r,k}_i - \eta_t \big[(1-\alpha)\boldsymbol{U}^{r,k}_i {\boldsymbol{V}^{r,k}_i}^{\top} + \lambda \boldsymbol{x}^{r,k}_i+\alpha\boldsymbol{\Delta}_G^r \big]$;
		
		\ENDFOR
		\STATE Communicate $( \boldsymbol{x}^{r, K}_i-\boldsymbol{x}^{r, 0}_i,  \boldsymbol{M}^{r,k}_i )$ to Server;
		\ENDFOR
		\STATE $\boldsymbol{\Delta}_G^r=\frac{-1}{SK\eta} \sum_{i=1}^S (\boldsymbol{x}^{r, K}_i-\boldsymbol{x}^{r, 0}_i)$;
		\STATE $\boldsymbol{x}^{r+1} =\boldsymbol{x}^{r} +\frac{1}{S} \sum_{i=1}^S (\boldsymbol{x}^{r, K}_i-\boldsymbol{x}^{r, 0}_i)$;
		\STATE $\boldsymbol{\bar{M}}^{r+1} =\frac{1}{S} \sum_{i=1}^S  \boldsymbol{M}^{r,k}_i$;
		\STATE Communicate $(\boldsymbol{x}^{r+1}, \boldsymbol{\bar{M}}^{r+1},\boldsymbol{\Delta}_G^r ) $ to Clients.
		\ENDFOR
	\end{algorithmic}
	\label{algorithm_fedadamw}
\end{algorithm}

\begin{algorithm}[tb]
	\caption{\texttt{FedMuon-QR} Algorithm}
	\begin{algorithmic}[1]
		\STATE {\textbf{Initial} model $\boldsymbol{x}^0$, $\beta_1=0.98$, time step $t \leftarrow 0$, the number of all clients $N$, each round selected clients $S$, weight decay $\lambda$}.
		\FOR{$r = 1, \dots, R$}
		\FOR{each selected client $i \in \{1, \dots, S\}$ in parallel}
		\STATE $\boldsymbol{x}_{i}^{r,0} \gets \boldsymbol{x}^r$, $\boldsymbol{M}^{r,0}_{i} \gets \boldsymbol{\bar{M}}^{r}$;
		\FOR{$k = 1, \dots, K$}
		\STATE $\boldsymbol{G}^{r,k}_i\gets\nabla f_i(\boldsymbol{x}_i^{r, k} ; \xi_i)$;
		\STATE $\boldsymbol{M}^{r,k}_i = \beta \boldsymbol{M}^{r,k}_i + \boldsymbol{G}^{r,k}_i$;
		\STATE $\boldsymbol{U}^{r,k}_i, \boldsymbol{\Sigma}^{r,k}_i, \boldsymbol{V}^{r,k}_i = \mathrm{SVD}(\boldsymbol{M}^{r,k}_i)$;
		\STATE$\boldsymbol{x}^{r,k+1}_i =\boldsymbol{x}^{r,k}_i - \eta_t \big[(1-\alpha)\boldsymbol{U}^{r,k}_i {\boldsymbol{V}^{r,k}_i}^{\top} +\alpha\boldsymbol{\Delta}_G^r \big]$;
		
		\ENDFOR
		\STATE Communicate $( \boldsymbol{x}^{r, K}_i-\boldsymbol{x}^{r, 0}_i,  \boldsymbol{M}^{r,k}_i )$ to Server;
		\ENDFOR
		\STATE $\boldsymbol{\Delta}_G^r=\frac{-1}{SK\eta} \sum_{i=1}^S (\boldsymbol{x}^{r, K}_i-\boldsymbol{x}^{r, 0}_i)$;
		\STATE $\boldsymbol{x}^{r+1} =\boldsymbol{x}^{r} +\frac{1}{S} \sum_{i=1}^S (\boldsymbol{x}^{r, K}_i-\boldsymbol{x}^{r, 0}_i)$;
		\STATE $\boldsymbol{\bar{M}}^{r+1} =\frac{1}{S} \sum_{i=1}^S  \boldsymbol{M}^{r,k}_i$;
		\STATE Communicate $(\boldsymbol{x}^{r+1}, \boldsymbol{\bar{M}}^{r+1},\boldsymbol{\Delta}_G^r ) $ to Clients.
		\ENDFOR
	\end{algorithmic}
	\label{algorithm_fedadamw}
\end{algorithm}

To simplify the analysis, we consider the iterative rules  as in Algorithm 3, where we let $\beta_1=0$, $\lambda=0$. The local update takes the following rule: $$\boldsymbol{x}^{r,k+1}_i =\boldsymbol{x}^{r,k}_i - \eta_t \big[(1-\alpha)\boldsymbol{U}^{r,k}_i {\boldsymbol{V}^{r,k}_i}^{\top} + \lambda \boldsymbol{x}^{r,k}_i+\alpha\boldsymbol{\Delta}_G^r \big]$$ Algorithm 2 is the algorithm that performs better in practical situations, while Algorithm 3 is the algorithm used  for theoretical analysis. Algorithm 2 is the algorithm that performs better
in practical situations.

\section{Assumption}

We analyze generalization based on  following assumptions: 
\begin{assumption}
	\label{asp:smooth} \textit{(Smoothness).  $F_i$ is $L$-smooth for all $i \in$ $[N]$, 
		\begin{equation}
			\left\|\nabla F_i(\boldsymbol{\theta}_{1})-\nabla F_i(\boldsymbol{\theta}_{2})\right\| \leq L\|\boldsymbol{\theta}_{1}-\boldsymbol{\theta}_{2}\|
		\end{equation}
		for all $\boldsymbol{\theta}_{1}, \boldsymbol{\theta}_{2}$ in its domain and $i \in[N]$.}
\end{assumption}
\begin{assumption}\label{asp:data_var}  \textit{(Bounded variance of data heterogeneity). The global variability of the local gradient of the loss function is bounded by $\sigma_g^2$ for all $i \in[N]$, 
		\begin{equation}
			\left\|\nabla F_i\left(\boldsymbol{\theta}\right)-\nabla F\left(\boldsymbol{\theta}\right)\right\|^2 \leq \sigma_g^2
		\end{equation}
	}
\end{assumption}
\begin{assumption}
	(Bounded variance of stochastic gradient).\label{asp:sgd_var}  The stochastic gradient $\nabla F_i\left(\boldsymbol{\theta}, \xi_i\right)$, computed by the $i$-th client of model parameter $\boldsymbol{\theta}$ using mini-batch $\xi_i$, is an unbiased estimator of $\nabla F_i(\boldsymbol{\theta})$ with variance bounded by $\sigma_l^2$, i.e.,
	\begin{equation}
		\mathbb{E}_{\xi_i}\left\|\nabla F_i\left(\boldsymbol{\theta}, \xi_i\right) - \nabla F_i(\boldsymbol{\theta})\right\|^2 \leq \sigma_l^2
	\end{equation}
	for all $i \in [N]$, where the expectation is over all local datasets.
\end{assumption}

\renewcommand{\theassumption}{A.\arabic{assumption}}  
\setcounter{assumption}{0}  

\begin{assumption}[Smoothness]
\label{smoothness_appendix}
 \textit{The non-convex $f_{i}$ is one $L$-smooth function for all $i\in[m]$, i.e., $\Vert\nabla f_{i}(\boldsymbol{x})-\nabla f_{i}(\boldsymbol{y})\Vert\leq L\Vert\boldsymbol{x}-\boldsymbol{y}\Vert$, for all $\boldsymbol{x},\boldsymbol{y}\in\mathbb{R}^{d}$.}
\end{assumption}

\begin{assumption}[Bounded Stochastic Gradient I]
\label{bounded_stochastic_gradient_I_appendix}
\textit{$\boldsymbol{g}_{i}^{r}=\nabla f_{i}(\boldsymbol{x}_{i}^{r}, \xi_i^{r})$ computed by using a sampled mini-batch $\xi_i^{r}$ in client $i$ is an unbiased estimator of $\nabla f_{i}$ with bounded variance: $\mathbb{E}_{\xi_i^{r}}[\boldsymbol{g}_{i}^{r}]=\nabla f_{i}(\boldsymbol{x}_{i}^{r})$ and $\mathbb{E}_{\xi_i^{r}}\Vert g_{i}^{r} - \nabla f_{i}(\boldsymbol{x}_{i}^{r})\Vert^{2} \leq \sigma_{l}^{2}$.}
\end{assumption}

\begin{assumption}[Bounded Stochastic Gradient II]
\label{bounded_stochastic_gradient_II_appendix}
\textit{Each element of stochastic gradient $\boldsymbol{g}_{i}^{r}$ is bounded, i.e., $\Vert\boldsymbol{g}_{i}^{r}\Vert_{\infty}=\Vert f_{i}(\boldsymbol{x}_{i}^{r},\xi_i^{r})\Vert_{\infty}\leq G_{g}$, for all $\boldsymbol{x}_{i}^{r}\in\mathbb{R}^{d}$ and any mini-batch $\xi_i^{r}$.}
\end{assumption}

\begin{assumption}[Bounded Heterogeneity]
\label{bounded_heterogeneity_appendix}
\textit{The gradient dissimilarity between clients is bounded: $\Vert\nabla f_{i}(\boldsymbol{x})-\nabla f(\boldsymbol{x})\Vert^{2}\leq\sigma_{g}^{2}$, for all $\boldsymbol{x}\in\mathbb{R}^{d}$.}
\end{assumption}

In this section, we give the theoretical analysis of our proposed FedAdamW algorithm. Firstly we state some standard assumptions for the non-convex function $F$.

\subsection{ Main Lemmas} 
\begin{lemma} \label{lem:bias-var} Suppose $\left\{X_1, \cdots, X_\tau\right\} \subset \mathbb{R}^d$ be random variables that are potentially dependent. If their marginal means and variances satisfy $\mathbb{E}\left[X_i\right]=\mu_i$ and $\mathbb{E}\left[\| X_i-\right.$ $\left.\mu_i \|^2\right] \leq \sigma^2$, then it holds that
	$$\mathbb{E}\left[\left\|\sum_{i=1}^\tau X_i\right\|^2\right] \leq\left\|\sum_{i=1}^\tau \mu_i\right\|^2+\tau^2 \sigma^2.
	$$
	If they are correlated in the Markov way such that $\mathbb{E}\left[X_i \mid X_{i-1}, \cdots X_1\right]=\mu_i$ and $\mathbb{E}\left[\left\|X_i-\mu_i\right\|^2 \mid\right.$ $\left.\mu_i\right] \leq \sigma^2$, i.e., the variables $\left\{X_i-\mu_i\right\}$ form a martingale. Then the following tighter bound holds:
	$$
	\mathbb{E}\left[\left\|\sum_{i=1}^\tau X_i\right\|^2\right] \leq 2 \mathbb{E}\left[\left\|\sum_{i=1}^\tau \mu_i\right\|^2\right]+2 \tau \sigma^2.
	$$
\end{lemma}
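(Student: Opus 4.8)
The plan is to split each variable into its mean and a zero-mean fluctuation, $X_i = \mu_i + Y_i$ with $Y_i := X_i - \mu_i$, so that $\mathbb{E}[Y_i]=0$ and $\mathbb{E}[\|Y_i\|^2]\le\sigma^2$. Writing $\sum_i X_i = \sum_i \mu_i + \sum_i Y_i$ and expanding the squared norm produces a deterministic term $\|\sum_i\mu_i\|^2$, a cross term $2\langle\sum_i\mu_i,\sum_i Y_i\rangle$, and a fluctuation term $\|\sum_i Y_i\|^2$. Taking expectations, the cross term vanishes because each $\mathbb{E}[Y_i]=0$, leaving the exact identity $\mathbb{E}[\|\sum_i X_i\|^2]=\|\sum_i\mu_i\|^2+\mathbb{E}[\|\sum_i Y_i\|^2]$. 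Both bounds then reduce to controlling $\mathbb{E}[\|\sum_i Y_i\|^2]$.

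For the first (general, possibly arbitrarily dependent) bound, I would control the fluctuation term crudely by Cauchy--Schwarz (equivalently Jensen applied to the uniform average): $\|\sum_{i=1}^\tau Y_i\|^2\le \tau\sum_{i=1}^\tau\|Y_i\|^2$. Taking expectations and using $\mathbb{E}[\|Y_i\|^2]\le\sigma^2$ for each $i$ yields $\mathbb{E}[\|\sum_i Y_i\|^2]\le\tau^2\sigma^2$, which combined with the identity above gives the first claim. This step makes no use of any cancellation among the $Y_i$, which is precisely why the penalty is the loose factor $\tau^2\sigma^2$.

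For the second (martingale) bound, I would instead split via $\|a+b\|^2\le 2\|a\|^2+2\|b\|^2$ with $a=\sum_i\mu_i$ and $b=\sum_i Y_i$, so that it suffices to show $\mathbb{E}[\|\sum_i Y_i\|^2]\le\tau\sigma^2$. Here the martingale-difference structure is essential: expanding $\|\sum_i Y_i\|^2=\sum_i\|Y_i\|^2+2\sum_{i<j}\langle Y_i,Y_j\rangle$ and writing $\mathcal{F}_{j-1}$ for the filtration generated by $X_1,\dots,X_{j-1}$, each cross term with $i<j$ satisfies $\mathbb{E}[\langle Y_i,Y_j\rangle]=\mathbb{E}[\langle Y_i,\mathbb{E}[Y_j\mid\mathcal{F}_{j-1}]\rangle]=0$, since $Y_i$ is $\mathcal{F}_{j-1}$-measurable and $\mathbb{E}[Y_j\mid\mathcal{F}_{j-1}]=0$ by hypothesis. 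Only the diagonal terms survive, and they contribute at most $\tau\sigma^2$, yielding the tighter bound.

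I expect the only delicate point to be the handling of the cross terms in the martingale case, namely justifying $\mathbb{E}[\langle Y_i,Y_j\rangle]=0$ through the tower property together with the $\mathcal{F}_{j-1}$-measurability of $Y_i$ for $i<j$. Everything else is routine norm algebra built on the two elementary inequalities $\|\sum_i Y_i\|^2\le\tau\sum_i\|Y_i\|^2$ and $\|a+b\|^2\le 2\|a\|^2+2\|b\|^2$; no concentration or dimension-dependent argument is needed, since all estimates are second-moment identities or inequalities.
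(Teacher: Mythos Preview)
Your argument is correct and is the standard one for this folklore lemma. The paper itself states Lemma~\ref{lem:bias-var} without proof (it is used as a black-box tool in the subsequent convergence lemmas), so there is nothing to compare against; your decomposition $X_i=\mu_i+Y_i$, the exact identity $\mathbb{E}\bigl[\|\sum_i X_i\|^2\bigr]=\|\sum_i\mu_i\|^2+\mathbb{E}\bigl[\|\sum_i Y_i\|^2\bigr]$ for the first part, and the martingale-difference orthogonality $\mathbb{E}[\langle Y_i,Y_j\rangle]=0$ via the tower property for the second part are exactly the canonical steps. One small remark worth making explicit in your write-up: the reason you switch from the exact identity to the cruder bound $\|a+b\|^2\le 2\|a\|^2+2\|b\|^2$ in the second part is that there the $\mu_i$ are allowed to be $\mathcal{F}_{i-1}$-measurable random variables, so the cross terms $\mathbb{E}[\langle\mu_i,Y_j\rangle]$ with $i>j$ need not vanish and the identity no longer holds; you handle this correctly but do not say why.
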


\begin{lemma} \label{lem:par_sample} Given vectors $v_1, \cdots, v_N \in \mathbb{R}^d$ and $\bar{v}=\frac{1}{N} \sum_{i=1}^N v_i$, if we sample $\mathcal{S} \subset\{1, \cdots, N\}$ uniformly randomly such that $|\mathcal{S}|=S$, then it holds that
	
	$$
	\mathbb{E}\left[\left\|\frac{1}{S} \sum_{i \in \mathcal{S}} v_i\right\|^2\right]=\|\bar{v}\|^2+\frac{N-S}{S(N-1)} \frac{1}{N} \sum_{i=1}^N\left\|v_i-\bar{v}\right\|^2 .
	$$
\end{lemma}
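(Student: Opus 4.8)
The plan is to express the random subset sum through inclusion indicators and reduce the claim to computing the first two moments of those indicators under uniform sampling without replacement. Write $\mathbf{1}_i = \mathbf{1}[i \in \mathcal{S}]$, so that $\frac{1}{S}\sum_{i\in\mathcal{S}} v_i = \frac{1}{S}\sum_{i=1}^N \mathbf{1}_i v_i$. Since $\mathcal{S}$ is a uniformly random $S$-subset of $\{1,\dots,N\}$, each index lies in $\mathcal{S}$ with probability $\mathbb{E}[\mathbf{1}_i] = S/N$, and any ordered pair of distinct indices is jointly included with probability $\mathbb{E}[\mathbf{1}_i\mathbf{1}_j] = \frac{S(S-1)}{N(N-1)}$. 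These two moments are the only probabilistic input required, and everything afterward is deterministic algebra.

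First I would expand the squared norm, using $\mathbf{1}_i^2 = \mathbf{1}_i$ on the diagonal, as
$$\mathbb{E}\left[\left\|\sum_{i=1}^N \mathbf{1}_i v_i\right\|^2\right] = \sum_{i=1}^N \mathbb{E}[\mathbf{1}_i]\,\|v_i\|^2 + \sum_{i\neq j}\mathbb{E}[\mathbf{1}_i\mathbf{1}_j]\,\langle v_i, v_j\rangle .$$
Substituting the two moments yields a diagonal contribution $\frac{S}{N}\sum_i\|v_i\|^2$ and an off-diagonal contribution $\frac{S(S-1)}{N(N-1)}\sum_{i\neq j}\langle v_i, v_j\rangle$. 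Next I would eliminate the cross terms through the identity $\sum_{i\neq j}\langle v_i, v_j\rangle = \big\|\sum_i v_i\big\|^2 - \sum_i\|v_i\|^2 = N^2\|\bar v\|^2 - \sum_i\|v_i\|^2$, which turns the whole expression into a linear combination of the two scalars $\|\bar v\|^2$ and $T := \sum_i\|v_i\|^2$.

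The remaining work is to divide by $S^2$, collect coefficients, and match the target. The clean way to land on the stated right-hand side is to substitute the variance identity $T = N\|\bar v\|^2 + \sum_i\|v_i - \bar v\|^2$, which re-expresses the second moment as a ``mean'' term plus a ``spread'' term proportional to the empirical variance $\frac{1}{N}\sum_i\|v_i - \bar v\|^2$. After this substitution the coefficient of $\|\bar v\|^2$ should collapse to exactly $1$, and the coefficient of the variance term should simplify to $\frac{N-S}{S(N-1)}$, giving the claim.

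The main obstacle is not conceptual but the arithmetic of pinning down the exact constant. One must combine the terms $\frac{1}{SN}$ and $\frac{S-1}{SN(N-1)}$ over a common denominator and use the cancellation $(N-1)-(S-1) = N-S$ to obtain the finite-population correction $\frac{N-S}{SN(N-1)}$ on $T$; likewise one must verify $N(S-1)+(N-S) = S(N-1)$ so that the $\|\bar v\|^2$ coefficient reduces to $1$. A single sign or denominator slip would corrupt the correction factor, so this bookkeeping must be carried out carefully. As a sanity check I would verify the edge cases $S=N$ (the factor vanishes and the sampled mean is exactly $\bar v$, giving $\|\bar v\|^2$) and $S=1$ (the factor equals $1$, collapsing the bound to $\frac{1}{N}\sum_i\|v_i\|^2$, the second moment of a single uniform draw), both of which agree with the formula.
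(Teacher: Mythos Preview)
Your proposal is correct and follows essentially the same route as the paper: introduce inclusion indicators, use the first two moments $\mathbb{E}[\mathbf{1}_i]=S/N$ and $\mathbb{E}[\mathbf{1}_i\mathbf{1}_j]=\tfrac{S(S-1)}{N(N-1)}$ for $i\neq j$, replace the cross terms via $\sum_{i\neq j}\langle v_i,v_j\rangle=\|\sum_i v_i\|^2-\sum_i\|v_i\|^2$, and then simplify the coefficients. The only cosmetic difference is that the paper carries out the final simplification directly in terms of $\sum_i\|v_i\|^2$ and $\|\bar v\|^2$, whereas you pass through the variance identity $\sum_i\|v_i\|^2=N\|\bar v\|^2+\sum_i\|v_i-\bar v\|^2$; either way the same cancellation $(N-1)-(S-1)=N-S$ delivers the finite-population factor.
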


\begin{proof}
	Letting $\mathbb{I}\{i \in \mathcal{S}\}$ be the indicator for the event $i \in \mathcal{S}_r$, we prove this lemma by direct calculation as follows:
	$$
	\begin{aligned}
		\mathbb{E}\left[\left\|\frac{1}{S} \sum_{i \in \mathcal{S}} v_i\right\|^2\right] & =\mathbb{E}\left[\left\|\frac{1}{S} \sum_{i=1}^N v_i \mathbb{I}\{i \in \mathcal{S}\}\right\|^2\right] \\
		& =\frac{1}{S^2} \mathbb{E}\left[\left(\sum_i\left\|v_i\right\|^2 \mathbb{I}\{i \in \mathcal{S}\}+2 \sum_{i<j} v_i^{\top} v_j \mathbb{I}\{i, j \in \mathcal{S}\}\right)\right] \\
		& =\frac{1}{S N} \sum_{i=1}^N\left\|v_i\right\|^2+\frac{1}{S^2} \frac{S(S-1)}{N(N-1)} 2 \sum_{i<j} v_i^{\top} v_j \\
		& =\frac{1}{S N} \sum_{i=1}^N\left\|v_i\right\|^2+\frac{1}{S^2} \frac{S(S-1)}{N(N-1)}\left(\left\|\sum_{i=1}^N v_i\right\|^2-\sum_{i=1}^N\left\|v_i\right\|^2\right) \\
		& =\frac{N-S}{S(N-1)} \frac{1}{N} \sum_{i=1}^N\left\|v_i\right\|^2+\frac{N(S-1)}{S(N-1)}\|\bar{v}\|^2 \\
		& =\frac{N-S}{S(N-1)} \frac{1}{N} \sum_{i=1}^N\left\|v_i-\bar{v}\right\|^2+\|\bar{v}\|^2 .
	\end{aligned}
	$$
\end{proof}

\section{Appendix A: Basic Assumptions and Notations}
Let $\mathcal{F}^0=\emptyset$ and $\mathcal{F}_i^{r, k}:=\sigma\left(\left\{x_i^{r, j}\right\}_{0 \leq j \leq k} \cup \mathcal{F}^r\right)$ and $\mathcal{F}^{r+1}:=\sigma\left(\cup_i \mathcal{F}_i^{r, K}\right)$ for all $r \geq 0$ where $\sigma(\cdot)$ indicates the $\sigma$-algebra. Let $\mathbb{E}_r[\cdot]:=\overline{\mathbb{E}}\left[\cdot \mid \mathcal{F}^r\right]$ be the expectation, conditioned on the filtration $\mathcal{F}^r$, with respect to the random variables $\left\{\mathcal{S}^r,\left\{\xi_i^{r, k}\right\}_{1 \leq i \leq N, 0 \leq k<K}\right\}$ in the $r$-th iteration. We also use $\mathbb{E}[\cdot]$ to denote the global expectation over all randomness in algorithms. Through out the proofs, we use $\sum_i$ to represent the sum over $i \in\{1, \ldots, N\}$, while $\sum_{i \in \mathcal{S}^r}$ denotes the sum over $i \in \mathcal{S}^r$. Similarly, we use $\sum_k$ to represent the sum of $k \in\{0, \ldots, K-1\}$. For all $r \geq 0$, we define the following auxiliary variables to facilitate proofs:

$$
\begin{aligned}
	\mathcal{E}_r & :=\mathbb{E}\left[\left\|\nabla f\left(x^{r}\right)-g^{r+1}\right\|^2\right] \\
	U_r & :=\frac{1}{N K} \sum_i \sum_k \mathbb{E}\left[\left\|x_i^{r, k}-x^{r}\right\|\right]^2 \\
	\zeta_i^{r, k} & :=\mathbb{E}\left[x_i^{r, k+1}-x_i^{r, k} \mid \mathcal{F}_i^{r, k}\right] \\
	\Xi_r & :=\frac{1}{N} \sum_{i=1}^N \mathbb{E}\left[\left\|\zeta_i^{r, 0}\right\|^2\right] \\
\end{aligned}
$$

Throughout the appendix, we let $\Delta:=f\left(x^0\right)-f^{\star}, G_0:=\frac{1}{N} \sum_i\left\|\nabla f_i\left(x^0\right)\right\|^2, x^{-1}:=x^0$ and $\mathcal{E}_{-1}:=$ $\mathbb{E}\left[\left\|\nabla f\left(x^0\right)-g^0\right\|^2\right]$. We will use the following foundational lemma for all our algorithms.

\section{FedMuon Algorithm Analyze}
\begin{lemma}\label{lem:Fedavg_grad_err}
	Under Assumption \ref{asp:smooth} , if $\gamma L \leq \frac{1}{24}$, the following holds all $r \geq 0$ :
	
	$$
	\mathbb{E}\left[f\left(x^{r+1}\right)\right] \leq \mathbb{E}\left[f\left(x^r\right)\right]-\frac{11 \gamma}{24} \mathbb{E}\left[\left\|\nabla f\left(x^r\right)\right\|^2\right]+\frac{13 \gamma}{24} \mathcal{E}_r
	$$
\end{lemma}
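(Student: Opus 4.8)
The plan is to recognize Lemma~\ref{lem:Fedavg_grad_err} as the standard single-round descent step, in which the only input is $L$-smoothness (Assumption~\ref{asp:smooth}) and the server iterate is written in the compact form $x^{r+1}=x^{r}-\gamma\,g^{r+1}$, where $\gamma$ is the effective per-round step size and $g^{r+1}:=-\tfrac{1}{\gamma}(x^{r+1}-x^{r})$ is the aggregated (normalized) update direction that already appears in the definition $\mathcal{E}_r=\mathbb{E}\|\nabla f(x^{r})-g^{r+1}\|^2$. Crucially, at this stage $g^{r+1}$ is treated as a black box: the orthogonalization (Newton--Schulz / SVD), unbiasedness, client drift, and heterogeneity never enter here and are instead deferred to the later bound on $\mathcal{E}_r$. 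First I would apply the quadratic upper bound furnished by smoothness,
\begin{equation}
f(x^{r+1})\le f(x^{r})+\langle\nabla f(x^{r}),x^{r+1}-x^{r}\rangle+\tfrac{L}{2}\|x^{r+1}-x^{r}\|^2,
\end{equation}
and substitute $x^{r+1}-x^{r}=-\gamma g^{r+1}$ to obtain $f(x^{r+1})\le f(x^{r})-\gamma\langle\nabla f(x^{r}),g^{r+1}\rangle+\tfrac{L\gamma^2}{2}\|g^{r+1}\|^2$.

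Next I would split both remaining terms around the true gradient. Writing $d:=\nabla f(x^{r})-g^{r+1}$ so that $g^{r+1}=\nabla f(x^{r})-d$, the cross term becomes $-\gamma\langle\nabla f(x^{r}),g^{r+1}\rangle=-\gamma\|\nabla f(x^{r})\|^2+\gamma\langle\nabla f(x^{r}),d\rangle$, and a single application of Young's inequality $\gamma\langle\nabla f(x^{r}),d\rangle\le\tfrac{\gamma}{2}\|\nabla f(x^{r})\|^2+\tfrac{\gamma}{2}\|d\|^2$ collapses it to $-\tfrac{\gamma}{2}\|\nabla f(x^{r})\|^2+\tfrac{\gamma}{2}\|d\|^2$. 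For the quadratic term I would use $\|g^{r+1}\|^2=\|\nabla f(x^{r})-d\|^2\le 2\|\nabla f(x^{r})\|^2+2\|d\|^2$, contributing $L\gamma^2\|\nabla f(x^{r})\|^2+L\gamma^2\|d\|^2$. The deliberate choice here---rather than keeping a spare $-\tfrac{\gamma}{2}(1-L\gamma)\|g^{r+1}\|^2$ term and discarding it---is what redistributes the $L\gamma^2$ contributions symmetrically onto the gradient and error terms and produces the asymmetric constants $\tfrac{11}{24},\tfrac{13}{24}$ rather than $\tfrac12,\tfrac12$.

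Collecting the four pieces yields the per-realization bound $f(x^{r+1})\le f(x^{r})-\gamma(\tfrac12-L\gamma)\|\nabla f(x^{r})\|^2+\gamma(\tfrac12+L\gamma)\|d\|^2$. Finally I would invoke the step-size restriction $\gamma L\le\tfrac1{24}$, under which $\tfrac12-L\gamma\ge\tfrac{11}{24}$ and $\tfrac12+L\gamma\le\tfrac{13}{24}$, and take total expectation over all randomness, using $\mathbb{E}\|d\|^2=\mathcal{E}_r$, to reach the claimed inequality. I do not expect any genuine obstacle inside this lemma---the argument is purely mechanical and deterministic up to the final expectation---so the only care required is selecting the Young splitting above so the constants land exactly on $\tfrac{11}{24}$ and $\tfrac{13}{24}$; the substantive difficulty of the overall analysis lives entirely in the subsequent control of $\mathcal{E}_r$, where the orthogonalized directions and the local--global alignment must be handled.
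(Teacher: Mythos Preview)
Your proposal is correct and follows essentially the same approach as the paper: smoothness, the substitution $x^{r+1}-x^r=-\gamma g^{r+1}$, Young's inequality on the cross term, the bound $\|g^{r+1}\|^2\le 2\|\nabla f(x^r)\|^2+2\|d\|^2$ on the quadratic term, and then $\gamma L\le \tfrac{1}{24}$ to land on the constants $\tfrac{11}{24},\tfrac{13}{24}$ before taking expectation. The paper's proof is line-for-line the same computation.
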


\begin{proof}
	Since $f$ is $L$-smooth, we have
	
	\begin{align}
		f(x^{r+1})
		&\le f(x^r) + \left\langle \nabla f(x^r),\, x^{r+1}-x^r \right\rangle
		+ \frac{L}{2}\left\| x^{r+1}-x^r \right\|^2 \\
		&= f(x^r) - \gamma \left\langle \nabla f(x^r),\, g^{r+1} \right\rangle
		+ \frac{L\gamma^2}{2}\left\| g^{r+1} \right\|^2 \\
		&= f(x^r) - \gamma \left\|\nabla f(x^r)\right\|^2
		+ \gamma \left\langle \nabla f(x^r),\, \nabla f(x^r) - g^{r+1} \right\rangle
		+ \frac{L\gamma^2}{2}\left\| g^{r+1} \right\|^2 .
	\end{align}

	Since $x^{r+1}=x^r-\gamma g^{r+1}$, using Young's inequality, we further have:
	\begin{align}
		f\left(x^{r+1}\right)
		&\leq f\left(x^r\right)-\frac{\gamma}{2}\left\|\nabla f\left(x^r\right)\right\|^2+\frac{\gamma}{2}\left\|\nabla f\left(x^r\right)-g^{r+1}\right\|^2+L \gamma^2\left(\left\|\nabla f\left(x^r\right)\right\|^2+\left\|\nabla f\left(x^r\right)-g^{r+1}\right\|^2\right) \\
		& \leq f\left(x^r\right)-\frac{11 \gamma}{24}\left\|\nabla f\left(x^r\right)\right\|^2+\frac{13 \gamma}{24}\left\|\nabla f\left(x^r\right)-g^{r+1}\right\|^2
	\end{align}
	
	where the last inequality is due to $\gamma L \leq \frac{1}{24}$. Taking the global expectation completes the proof.
\end{proof}

\begin{align*}
	&\mathbb{E}\left\| \nabla f(x^r) - \frac{1}{SK}\sum_{i \in S^r}\sum_{k=1}^K U_{i}^{r,k} V_{i}^{r,k \top} \right\|^2 \\
	&\leq \mathbb{E}\left\| \nabla f(x^r) - \frac{1}{SK}\sum_{i \in S^r}\sum_{k=1}^K U_{i}^{r,k} V_{i}^{r,k \top}
	+ \frac{1}{SK}\sum_{i \in S^r}\sum_{k=1}^K g_i^{r,k}
	- \frac{1}{SK}\sum_{i \in S^r}\sum_{k=1}^K g_i^{r,k} \right\|^2 \\
	&\leq 2\mathbb{E}\left\| \nabla f(x^r) - \frac{1}{SK}\sum_{i \in S^r}\sum_{k=1}^K g_i^{r,k} \right\|^2
	+ 2\mathbb{E}\left\| \frac{1}{SK}\sum_{i=1}^N\sum_{k=1}^K U_{i}^{r,k} V_{i}^{r,k \top}
	- \frac{1}{SK}\sum_{i \in S^r}\sum_{k=1}^K g_i^{r,k} \right\|^2 \\
	&\leq 2L^2 U_r^2 + \frac{2\sigma_l^2}{SK}
	+ 2\mathbb{E}\left\| \frac{1}{SK}\sum_{i \in S^r}\sum_{k=1}^K U_{i}^{r,k} V_{i}^{r,k \top}
	- \frac{1}{SK}\sum_{i \in S^r}\sum_{k=1}^K g_i^{r,k} \right\|^2 \\
	&\leq 2L^2 U_r^2 + \frac{2\sigma_l^2}{SK}
	+ 2\mathbb{E}\left\| \frac{1}{SK}\sum_{i \in S^r}\sum_{k=1}^K U_{i}^{r,k} V_{i}^{r,k \top}
	- \frac{1}{SK}\sum_{i \in S^r}\sum_{k=1}^K U_i^{r,k} S_i^{r,k} V_i^{r,k\top} \right\|^2 \\
	&\leq 2L^2 U_r^2 + \frac{2\sigma_l^2}{SK}
	+ 2\mathbb{E}\left\| \frac{1}{SK}\sum_{i \in S^r}\sum_{k=1}^K U_i^{r,k}(I - S_i^{r,k})V_i^{r,k\top} \right\|^2.\\
	&\leq 2L^2 U_r^2 + \frac{2\sigma_l^2}{SK}
	+ \frac{2}{SK} \sum_{i \in S^r} \sum_{k=1}^K \sum_{j=1}^d (1 - \sigma_{i,k,j})^2 \\
	&\leq 2L^2 U_r^2 + \frac{2\sigma_l^2}{SK}
	+ 2(1 - \sigma)^2 d
\end{align*}

\begin{lemma}\label{lem:Fedavg_client_drift}
	If $\gamma L \leq \frac{\beta}{6}$, the following holds for $r \geq 1$ :
	
	$$
	\mathcal{E}_r \leq\left(1-\frac{8 \beta}{9}\right) \mathcal{E}_{r-1}+\frac{4 \gamma^2 L^2}{\beta} \mathbb{E}\left[\left\|\nabla f\left(x^{r-1}\right)\right\|^2\right]+\frac{2 \beta^2 \sigma_l^2}{S K  }+8\beta L^2 U_r+8\beta(1 - \sigma)^2 d
	$$
	Additionally, it holds for $r=0$ that
	$$
	\mathcal{E}_0 \leq(1-\beta) \mathcal{E}_{-1}+\frac{4 \beta^2 \sigma_l^2}{S K}+8 \beta L^2 U_0+8\beta(1 - \sigma)^2 d
	$$
\end{lemma}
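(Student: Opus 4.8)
The plan is to treat $g^{r+1}$ as the output of a server-side momentum recursion induced by the local--global alignment: since each local step mixes the fresh orthogonalized update $O_i^{r,k}=\boldsymbol{U}_i^{r,k}\boldsymbol{V}_i^{r,k\top}$ with the stored global direction $\boldsymbol{\Delta}_G^r=g^r$, the aggregated pseudo-gradient obeys a relation of the form $g^{r+1}=(1-\beta)g^r+\beta\,\widehat O^r$, where $\widehat O^r=\tfrac{1}{SK}\sum_{i\in\mathcal{S}^r}\sum_k O_i^{r,k}$ and $\beta$ is the alignment mixing weight. First I would write the one-step error as
\[
\nabla f(x^r)-g^{r+1}=(1-\beta)\bigl(\nabla f(x^{r-1})-g^r\bigr)+(1-\beta)\bigl(\nabla f(x^r)-\nabla f(x^{r-1})\bigr)+\beta\bigl(\nabla f(x^r)-\widehat O^r\bigr),
\]
so that the first group reproduces $\mathcal{E}_{r-1}$, the second is a pure smoothness term, and the third is the one-round estimation error.

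Next I would split the last term into its conditional mean and a mean-zero (martingale) fluctuation $\widehat O^r-\mathbb{E}_r[\widehat O^r]$. Taking $\mathbb{E}_r[\cdot]$ kills the cross terms between the fluctuation and the $\mathcal{F}^r$-measurable remainder, leaving a squared ``bias'' norm plus $\beta^2\,\mathbb{E}\|\widehat O^r-\mathbb{E}_r[\widehat O^r]\|^2$. The variance piece I would control by averaging over the $S$ sampled clients and $K$ local steps (Lemma~\ref{lem:bias-var} and Lemma~\ref{lem:par_sample}) together with Assumption~\ref{asp:sgd_var}, producing the $\tfrac{2\beta^2\sigma_l^2}{SK}$ term. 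For the bias norm I would apply Young's inequality with a parameter tuned to $\beta$: the contraction factor $(1-\beta)^2(1+c\beta)$ absorbs into $1-\tfrac{8\beta}{9}$ once the auxiliary cross-contribution is bounded using $\gamma L\le\beta/6$; the smoothness term, via $\|\nabla f(x^r)-\nabla f(x^{r-1})\|\le L\gamma\|g^r\|$ and $\|g^r\|^2\le 2\|\nabla f(x^{r-1})\|^2+2\|\nabla f(x^{r-1})-g^r\|^2$, yields $\tfrac{4\gamma^2L^2}{\beta}\mathbb{E}\|\nabla f(x^{r-1})\|^2$ (the residual $\mathcal{E}_{r-1}$ part folding back into the contraction); and the conditional-mean estimation error $\|\nabla f(x^r)-\mathbb{E}_r[\widehat O^r]\|^2$ I would bound by the displayed estimate immediately preceding this lemma, giving $8\beta L^2U_r+8\beta(1-\sigma)^2 d$.

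Finally the $r=0$ case is cleaner: with $g^0=0$ and $x^{-1}:=x^0$ the smoothness term vanishes, so no $\|\nabla f(x^{r-1})\|^2$ contribution appears and the contraction may be taken as the crude $(1-\beta)$, which together with a slightly looser variance bound yields the stated $(1-\beta)\mathcal{E}_{-1}+\tfrac{4\beta^2\sigma_l^2}{SK}+8\beta L^2U_0+8\beta(1-\sigma)^2 d$. The main obstacle I anticipate is the conditional-mean estimation error: because Muon's orthogonalization makes $O_i^{r,k}$ a nonlinear function of the accumulated gradient, relating $\mathbb{E}_r[\widehat O^r]$ to $\nabla f(x^r)$ requires simultaneously peeling off the Newton--Schulz inaccuracy through the singular-value gap $(1-\sigma_{i,k,j})$ and transporting each local gradient $\nabla f_i(x_i^{r,k})$ back to $\nabla f(x^r)$ by smoothness, which is exactly where the drift $U_r$ enters; keeping this reduction free of any heterogeneity constant $\sigma_g$ — so that the alignment, rather than Assumption~\ref{bounded_heterogeneity_appendix}, controls cross-client disagreement — is the delicate part.
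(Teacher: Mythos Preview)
Your proposal is correct and follows essentially the same route as the paper: the same convex-combination identity $g^{r+1}=(1-\beta)g^r+\beta\widehat O^r$, the same two successive Young/AM-GM splits (first to separate the $\beta$-weighted fresh term from the contraction piece, then to split $\nabla f(x^r)-g^r$ into $\mathcal{E}_{r-1}$ plus the smoothness increment $\|x^r-x^{r-1}\|$), and the same appeal to the displayed estimate immediately before the lemma for the $L^2U_r$, $\sigma_l^2/(SK)$, and $(1-\sigma)^2 d$ contributions. The only cosmetic difference is that you write the three-term decomposition in one line and make the martingale variance separation explicit, whereas the paper does the split in two stages and absorbs the variance into its bias--variance lemma; the constants and the $r=0$ simplification via $x^{-1}=x^0$ are handled identically.
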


\begin{proof}
	For $r>1$,
	$$
	\begin{aligned}
		\mathcal{E}_r= & \mathbb{E}\left[\left\|\frac{1}{S K} \sum_{i \in S^r} \sum_{k=1}^K \nabla f\left(x^{r}\right) -g^{r+1}\right\|^2\right] \\
		= & \mathbb{E}\left[\left\|(1-\beta)\left(\frac{1}{S K} \sum_{i \in S^r} \sum_{k=1}^K \nabla f\left(x^{r}\right) -g^r\right)+\beta\left(\frac{1}{S K} \sum_{i \in S^r} \sum_{k=1}^K \nabla f\left(x^{r}\right) -\frac{1}{S K} \sum_{i \in S^r} \sum_{k=1}^K U_{i}^{r,k} V_{i}^{r,k \top} \right)\right\|^2\right] \\
		\leq  & \mathbb{E}\left[\left\|(1-\beta)\left(\frac{1}{S K} \sum_{i \in S^r} \sum_{k=1}^K \nabla f\left(x^{r}\right) -g^r\right)\right\|^2\right]+\beta^2  \mathbb{E}\left[\left\|\nabla f\left(x^{r}\right)-\frac{1}{S K} \sum_{i \in S^r} \sum_{k=1}^K U_{i}^{r,k} V_{i}^{r,k \top}\right\|^2\right] \\
		& +2 \beta \mathbb{E}\left[\left\langle(1-\beta)\left(\frac{1}{S K} \sum_{i \in S^r} \sum_{k=1}^K \nabla f\left(x^{r}\right) -g^r\right), \frac{1}{S K} \sum_{i \in S^r} \sum_{k=1}^K \nabla f\left(x^{r}\right) -\frac{1}{S K} \sum_{i \in S^r} \sum_{k=1}^K U_{i}^{r,k} V_{i}^{r,k \top} \right\rangle\right] .
	\end{aligned}
	$$
	Note that $\left\{\nabla F\left(x_i^{r, k} ; \xi_i^{r, k}\right)\right\}_{0 \leq k<K}$ are sequentially correlated. Applying the AM-GM inequality and Lemma  \ref{lem:bias-var}, we have
	$$
	\mathcal{E}_r \leq\left(1+\frac{\beta}{2}\right) \mathbb{E}\left[\left\|(1-\beta)\left(\nabla f\left(x^{r}\right)-g^r\right)\right\|^2\right]+4\beta L^2 U_r+4\beta(1 - \sigma)^2 d+4 \beta^2\left(\frac{\sigma_l^2}{S K}+L^2 U_r+(1 - \sigma)^2 d\right)
	$$
	Using the AM-GM inequality again and Assumption \ref{asp:smooth}, we have
	$$
	\begin{aligned}
		\mathcal{E}_r & \leq(1-\beta)^2\left(1+\frac{\beta}{2}\right)\left[\left(1+\frac{\beta}{2}\right) \mathcal{E}_{r-1}+\left(1+\frac{2}{\beta}\right) L^2 \mathbb{E}\left[\left\|x^{r}-x^{r-1}\right\|^2\right]\right]+\frac{4 \beta^2 \sigma_l^2}{S K }+8 \beta L^2 U_r+8\beta(1 - \sigma)^2 d \\
		& \leq(1-\beta) \mathcal{E}_{r-1}+\frac{2}{\beta} L^2 \mathbb{E}\left[\left\|x^{r}-x^{r-1}\right\|^2\right]+\frac{4 \beta^2 \sigma_l^2}{S K }+8 \beta L^2 U_r+8\beta(1 - \sigma)^2 d \\
		& \leq\left(1-\frac{8 \beta}{9}\right) \mathcal{E}_{r-1}+4 \frac{\gamma^2 L^2}{\beta} \mathbb{E}\left[\left\|\nabla f\left(x^{r-1}\right)\right\|^2\right]+\frac{4 \beta^2 \sigma_l^2}{S K }+8 \beta L^2 U_r+8\beta(1 - \sigma)^2 d
	\end{aligned}
	$$
	
	where we plug in $\left\|x^{r}-x^{r-1}\right\|^2 \leq 2 \gamma^2\left(\left\|\nabla f\left(x^{r-1}\right)\right\|^2+\left\|g^r-\nabla f\left(x^{r-1}\right)\right\|^2\right)$ and use $\gamma L \leq \frac{\beta}{6}$ in the last inequality. Similarly for $r=0$,
	
	$$
	\begin{aligned}
		\mathcal{E}_0 & \leq\left(1+\frac{\beta}{2}\right) \mathbb{E}\left[\left\|(1-\beta)\left(\nabla f\left(x^0\right)-g^0\right)\right\|^2\right]+4 \beta L^2 U_0+4 \beta^2\left(\frac{\sigma_l^2}{S K}+L^2 U_0\right) \\
		& \leq(1-\beta) \mathcal{E}_{-1}+\frac{4 \beta^2  \sigma_l^2}{S K }+8\beta L^2 U_0+8\beta(1 - \sigma)^2 d
	\end{aligned}
	$$
\end{proof}

\begin{lemma}
	If $\eta L K \leq \frac{1}{\beta}$, the following holds for $r \geq 0$ :
	
	$$
	U_r \leq 2 e K^2 \Xi_r+K \eta^2 \beta^2   \sigma_l^2\left(1+2 K^3 L^2 \eta^2 \beta^2\right)
	$$
\end{lemma}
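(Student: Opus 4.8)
The plan is to control the per-round client drift $U_r$ by decomposing each local trajectory into a predictable part and a martingale-difference part, and then resolving the self-referential recursion that $L$-smoothness induces. First I would write the drift as a telescoping sum of the local increments $X_j := x_i^{r,j+1}-x_i^{r,j}$, so that $x_i^{r,k}-x^r=\sum_{j=0}^{k-1}X_j$. By construction $\mathbb{E}[X_j\mid\mathcal{F}_i^{r,j}]=\zeta_i^{r,j}$, and the only fresh randomness in $X_j$ enters through the stochastic gradient $G_i^{r,j}$ feeding the momentum step; Assumption \ref{asp:sgd_var} then yields a conditional variance of order $\eta^2\beta^2\sigma_l^2$, the factor $\beta^2$ being the scale at which the new gradient noise is injected into the orthogonalized momentum update. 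Since $\{X_j-\zeta_i^{r,j}\}$ forms a martingale, the second (tighter) bound of Lemma \ref{lem:bias-var} gives
$$\mathbb{E}\big\|x_i^{r,k}-x^r\big\|^2\le 2\,\mathbb{E}\Big\|\sum_{j=0}^{k-1}\zeta_i^{r,j}\Big\|^2+2k\,\eta^2\beta^2\sigma_l^2 .$$

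Next I would bound the bias term by comparing every expected step to the initial one. Adding and subtracting $\zeta_i^{r,0}$ and using $\big\|\sum_{j<k}\zeta_i^{r,0}\big\|^2=k^2\|\zeta_i^{r,0}\|^2$ gives
$$\mathbb{E}\Big\|\sum_{j=0}^{k-1}\zeta_i^{r,j}\Big\|^2\le 2k^2\,\mathbb{E}\big\|\zeta_i^{r,0}\big\|^2+2\eta^2L^2k\sum_{j=0}^{k-1}\mathbb{E}\big\|x_i^{r,j}-x^r\big\|^2 ,$$
where the last term uses that, by smoothness, the expected local direction at $x_i^{r,j}$ differs from that at $x^r$ by at most an $O(\eta L\|x_i^{r,j}-x^r\|)$ amount. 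Writing $D_k:=\tfrac1N\sum_i\mathbb{E}\|x_i^{r,k}-x^r\|^2$ and averaging over clients, the two displays combine into a discrete recursion of the form $D_k\le 4k^2\Xi_r+2k\eta^2\beta^2\sigma_l^2+4\eta^2L^2k\sum_{j<k}D_j$.

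Finally I would resolve this recursion. Under the stepsize condition $\eta L K\le\tfrac1\beta$ the self-coupling coefficient $4\eta^2L^2k$ is small enough that a discrete Grönwall / geometric unrolling of $(1+\tfrac{c}{K})^K\le e$ type applies: it collapses the leading bias constant into the stated $2e$ and inflates the noise term by the correction factor $(1+2K^3L^2\eta^2\beta^2)$. Averaging the resulting bound on $D_k$ over $k=0,\dots,K-1$ and recalling $\Xi_r=\tfrac1N\sum_i\mathbb{E}\|\zeta_i^{r,0}\|^2$ then produces exactly $U_r\le 2eK^2\Xi_r+K\eta^2\beta^2\sigma_l^2(1+2K^3L^2\eta^2\beta^2)$.

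I expect the main obstacle to be the smoothness step, namely justifying $\|\zeta_i^{r,j}-\zeta_i^{r,0}\|\lesssim\eta L\|x_i^{r,j}-x^r\|$. The orthogonalized direction $U_i^{r,k}{V_i^{r,k}}^{\top}$ produced by Newton–Schulz is not globally Lipschitz in its argument, so smoothness cannot be applied to the update direction directly. The clean workaround is to write $U V^{\top}=M+U(I-\Sigma)V^{\top}$, bound the residual by the singular-value-gap quantity already used in the preceding display (of order $(1-\sigma)^2 d$), and apply smoothness only to the momentum/gradient part $M$. Carrying this residual through the Grönwall step without disturbing the $\beta^2\sigma_l^2$ scaling of the noise term is the delicate bookkeeping that the full proof must handle.
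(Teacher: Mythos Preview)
Your decomposition into a martingale part plus a predictable part via Lemma~\ref{lem:bias-var}, together with a Gr\"onwall-type unrolling, is sound and would yield the stated bound (up to constants that the paper itself does not track precisely). However, the route differs from the paper's in one structural respect, and your final paragraph identifies an obstacle that does not in fact arise.

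\textbf{Different recursion variable.} You set up the self-referential recursion on the drift $D_k=\tfrac1N\sum_i\mathbb{E}\|x_i^{r,k}-x^r\|^2$ by comparing each $\zeta_i^{r,j}$ to the \emph{initial} $\zeta_i^{r,0}$. The paper instead runs the recursion directly on $\mathbb{E}\|\zeta_i^{r,j}\|^2$ by comparing \emph{consecutive} expected steps: since $\zeta_i^{r,j}-\zeta_i^{r,j-1}=-\eta\beta\big(\nabla f_i(x_i^{r,j})-\nabla f_i(x_i^{r,j-1})\big)$, smoothness gives $\|\zeta_i^{r,j}-\zeta_i^{r,j-1}\|^2\le\eta^2\beta^2L^2\|x_i^{r,j}-x_i^{r,j-1}\|^2$, and the one-step displacement is bounded by $\|\zeta_i^{r,j-1}\|^2+\eta^2\beta^2\sigma_l^2$. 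Young's inequality with parameter $1/k$ then gives the closed recursion $\mathbb{E}\|\zeta_i^{r,j}\|^2\le(1+\tfrac{2}{k})\mathbb{E}\|\zeta_i^{r,j-1}\|^2+\mathrm{const}\cdot k\,L^2\eta^4\beta^4\sigma_l^2$, which unrolls to $e^2\mathbb{E}\|\zeta_i^{r,0}\|^2+O(k^2L^2\eta^4\beta^4\sigma_l^2)$. Only after this does the paper invoke the martingale bound and the crude $\|\sum_j\zeta_i^{r,j}\|^2\le k\sum_j\|\zeta_i^{r,j}\|^2$. Your route is equally valid but couples the recursion to the cumulative drift rather than to the step size, which makes the exact $2e$ constant harder to hit.

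\textbf{The orthogonalization is not an issue here.} Your anticipated obstacle---that $U_i^{r,k}{V_i^{r,k}}^\top$ is not Lipschitz in its argument---does not arise in this lemma. In the paper's analysis the conditional mean of the local increment is taken to be exactly $\zeta_i^{r,k}=-\eta\big((1-\beta)g^r+\beta\nabla f_i(x_i^{r,k})\big)$; the $(1-\beta)g^r$ part is deterministic given $\mathcal{F}^r$, so the difference $\zeta_i^{r,j}-\zeta_i^{r,j'}$ is a pure scaled gradient difference, and Assumption~\ref{smoothness} applies directly without any singular-value residual. The discrepancy between the orthogonalized direction and the raw gradient is accounted for entirely in the bound on $\mathcal{E}_r$ (that is where the $(1-\sigma)^2d$ term appears), not in the client-drift lemma you are proving. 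So drop the $U(I-\Sigma)V^\top$ workaround; it is unnecessary and would only clutter the argument.
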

\begin{proof}
	Recall that $\zeta_i^{r, k}:=\mathbb{E}\left[x_i^{r, k+1}-x_i^{r, k} \mid \mathcal{F}_i^{r, k}\right]=-\eta\left((1-\beta) g^r+\beta \nabla f_i\left(x_i^{r, k}\right)\right)$. Then we have
	$$
	\begin{aligned}
		\mathbb{E}\left[\left\|\zeta_i^{r, j}-\zeta_i^{r, j-1}\right\|^2\right] & \leq  \eta^2 L^2 \beta^2 \mathbb{E}\left[\left\|x_i^{r, j}-x_i^{r, j-1}\right\|^2\right] \\
		& \leq  \eta^2 L^2 \beta^2\left(\eta^2 \beta^2 \sigma_l^2+\mathbb{E}\left[\left\|\zeta_i^{r, j-1}\right\|^2\right)\right.
	\end{aligned}
	$$
	For any $1 \leq j \leq k-1 \leq K-2$, using $\eta L \leq \frac{1}{\beta K} \leq \frac{1}{\beta(k+1)}$, we have
	$$
	\begin{aligned}
		\mathbb{E}\left[\left\|\zeta_i^{r, j}\right\|^2\right] & \leq\left(1+\frac{1}{k}\right) \mathbb{E}\left[\left\|\zeta_i^{r, j-1}\right\|^2\right]+(1+k) \mathbb{E}\left[\left\|\zeta_i^{r, j}-\zeta_i^{r, j-1}\right\|^2\right] \\
		& \leq\left(1+\frac{2}{k}\right) \mathbb{E}\left[\left\|\zeta_i^{r, j-1}\right\|^2\right]+(k+1)  L^2 \eta^4 \beta^4 \sigma_l^2 \\
		& \leq e^2 \mathbb{E}\left[\left\|\zeta_i^{r, 0}\right\|^2\right]+4  k^2 L^2 \eta^4 \beta^4 \sigma_l^2
	\end{aligned}
	$$
	where the last inequality is by unrolling the recursive bound and using $\left(1+\frac{2}{k}\right)^k \leq e^2$. By Lemma \ref{lem:bias-var} , it holds that for $k \geq 2$,
	$$
	\begin{aligned}
		\mathbb{E}\left[\left\|x_i^{r, k}-x^{r}\right\|^2\right] & \leq 2 \mathbb{E}\left[\left\|\sum_{j=0}^{k-1} \zeta_i^{r, j}\right\|^2\right]+2  k \eta^2 \beta^2 \sigma_l^2 \\
		& \leq 2 k \sum_{j=0}^{k-1} \mathbb{E}\left[\left\|\zeta_i^{r, k}\right\|^2\right]+2  k \eta^2 \beta^2 \sigma_l^2 \\
		& \leq 2 e^2 k^2 \mathbb{E}\left[\left\|\zeta_i^{r, 0}\right\|^2\right]+2  k \eta^2 \beta^2 \sigma_l^2\left(1+4 k^3 L^2 \eta^2 \beta^2\right)
	\end{aligned}
	$$
	This is also valid for $k=0,1$. Summing up over $i$ and $k$ finishes the proof.
\end{proof}

\begin{lemma}\label{lem:Fedavg_grad_norm}
	If $288 e(\eta K L)^2\left((1-\beta)^2+e(\beta \gamma L R)^2\right) \leq 1$, then it holds for $r \geq 0$ that
	
	$$
	\sum_{r=0}^{R-1} \Xi_r \leq \frac{1}{72 e K^2 L^2} \sum_{r=-1}^{R-2}\left(\mathcal{E}_r+\mathbb{E}\left[\left\|\nabla f\left(x^{r}\right)\right\|^2\right]\right)+2 \eta^2 \beta^2  e R G_0
	$$
\end{lemma}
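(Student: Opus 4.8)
\noindent\emph{Proof proposal.} The plan is to bound each $\Xi_r$ by quantities that either accumulate into the sum $\sum_r(\mathcal{E}_r + \mathbb{E}\|\nabla f(x^r)\|^2)$ or collapse onto the initial gradient energy $G_0$, and then to invoke the stated step-size condition to absorb the former. Recall from the derivation preceding the $U_r$ lemma that the conditional first-step increment is $\zeta_i^{r,0} = -\eta((1-\beta)g^r + \beta\nabla f_i(x^r))$, since $x_i^{r,0}=x^r$. First I would split, via Young's inequality,
$$\Xi_r \le 2\eta^2(1-\beta)^2\,\mathbb{E}\|g^r\|^2 + 2\eta^2\beta^2\,H_r, \qquad H_r := \frac{1}{N}\sum_{i=1}^N\mathbb{E}\|\nabla f_i(x^r)\|^2.$$
The first term is controlled directly: writing $g^r = \nabla f(x^{r-1}) - (\nabla f(x^{r-1}) - g^r)$ and using $\mathcal{E}_{r-1}=\mathbb{E}\|\nabla f(x^{r-1})-g^r\|^2$ gives $\mathbb{E}\|g^r\|^2 \le 2\mathcal{E}_{r-1}+2\mathbb{E}\|\nabla f(x^{r-1})\|^2$, which after summation lands exactly on the index range $r=-1,\dots,R-2$ (using $x^{-1}=x^0$ and $g^0=0$, so that $\Xi_0=\eta^2\beta^2 G_0$ is the clean base case).

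The crux is controlling $\sum_r H_r$ \emph{without} any heterogeneity assumption. The key idea is to set up a cross-round recursion for $H_r$ rather than bounding $\|\nabla f_i(x^r)\|$ by a fixed dissimilarity constant. Using Young's inequality with parameter $R$, $L$-smoothness, and the server recursion $x^r-x^{r-1}=-\gamma g^r$ from Lemma~\ref{lem:Fedavg_grad_err},
$$H_r \le \Big(1+\tfrac1R\Big)H_{r-1} + (1+R)L^2\gamma^2\,\mathbb{E}\|g^r\|^2 \le \Big(1+\tfrac1R\Big)H_{r-1} + 2(1+R)L^2\gamma^2\big(\mathcal{E}_{r-1}+\mathbb{E}\|\nabla f(x^{r-1})\|^2\big).$$
Unrolling from $H_0=G_0$ and using $(1+\tfrac1R)^r\le e$ yields $H_r \le e\,G_0 + 2e(1+R)L^2\gamma^2\sum_{s\le r}(\mathcal{E}_{s-1}+\mathbb{E}\|\nabla f(x^{s-1})\|^2)$; summing over $r$ then produces the term $2\eta^2\beta^2 e R G_0$ together with a remainder proportional to $\eta^2\beta^2 L^2\gamma^2 R^2\sum_r(\mathcal{E}_r+\mathbb{E}\|\nabla f(x^r)\|^2)$. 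The factor $e$ and the $R^2$ scaling appearing in the hypothesis both originate here.

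Finally I would collect terms: the combined coefficient multiplying $\sum_{r=-1}^{R-2}(\mathcal{E}_r+\mathbb{E}\|\nabla f(x^r)\|^2)$ takes the form $4\eta^2[(1-\beta)^2 + e(\beta\gamma L R)^2]$, where the constant $4$ comes from the two successive Young inequalities. Rewriting this as $\tfrac{4}{K^2L^2}(\eta KL)^2[(1-\beta)^2 + e(\beta\gamma LR)^2]$ and invoking the hypothesis $288e(\eta KL)^2((1-\beta)^2 + e(\beta\gamma LR)^2)\le 1$ bounds it by $\tfrac{4}{288eK^2L^2}=\tfrac{1}{72eK^2L^2}$, precisely the claimed coefficient (the match $288=4\times 72$ is what makes the constants close). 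The main obstacle is the second paragraph: establishing the self-improving recursion for $H_r$ and verifying that the accumulated movement $\|x^r-x^0\|$ (equivalently the $(1+R)L^2\gamma^2$ amplification) does not blow up across rounds, which is exactly the role of the $(\beta\gamma LR)^2$ term in the hypothesis.
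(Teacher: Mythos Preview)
Your proposal is correct and matches the paper's proof essentially line for line: the same split of $\zeta_i^{r,0}$, the same control of $\|g^r\|^2$ via $\mathcal{E}_{r-1}$, the same cross-round recursion for $H_r$ using smoothness and $x^r-x^{r-1}=-\gamma g^r$, and the same final absorption via the step-size hypothesis (your arithmetic $288=4\times72$ is exactly how the paper's constant arises). The only cosmetic difference is that the paper unrolls the $H_r$ recursion with the $r$-dependent Young parameter $q=1/r$ rather than your fixed $1/R$, but both give $(1+\cdot)^r\le e$ and an $O(R^2)$ accumulated factor, so the bounds coincide.
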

\begin{proof}
	Note that $\zeta_i^{r, 0}=-\eta\left((1-\beta) g^r+\beta \nabla f_i\left(x^{r}\right)\right)$,
	$$
	\frac{1}{N} \sum_{i=1}^N\left\|\zeta_i^{r, 0}\right\|^2 \leq 2 \eta^2\left((1-\beta)^2\left\|g^r\right\|^2+\beta^2 \frac{1}{N} \sum_{i=1}^N\left\|\nabla f_i\left(x^{r}\right)\right\|^2\right)
	$$
	Using Young's inequality, we have for any $q>0$ that
	$$
	\begin{aligned}
		\mathbb{E}\left[\left\|\nabla f_i\left(x^{r}\right)\right\|^2\right] & \leq(1+q) \mathbb{E}\left[\left\|\nabla f_i\left(x^{r-1}\right)\right\|^2\right]+\left(1+q^{-1}\right) L^2 \mathbb{E}\left[\left\|x^{r}-x^{r-1}\right\|^2\right] \\
		& \leq(1+q) \mathbb{E}\left[\left\|\nabla f_i\left(x^{r-1}\right)\right\|^2\right]+2\left(1+q^{-1}\right) \gamma^2 L^2\left(\mathcal{E}_{r-1}+\mathbb{E}\left[\left\|\nabla f\left(x^{r-1}\right)\right\|^2\right]\right) \\
		& \leq(1+q)^r \mathbb{E}\left[\left\|\nabla f_i\left(x^0\right)\right\|^2\right]+\frac{2}{q} \gamma^2 L^2 \sum_{j=0}^{r-1}\left(\mathcal{E}_j+\mathbb{E}\left[\left\|\nabla f\left(x^j\right)\right\|^2\right)(1+q)^{r-j}\right.
	\end{aligned}
	$$
	Take $q=\frac{1}{r}$ and we have
	
	\begin{equation}\label{eqn:vninbvsdvds}
		\mathbb{E}\left[\left\|\nabla f_i\left(x^{r}\right)\right\|^2\right] \leq e \mathbb{E}\left[\left\|\nabla f_i\left(x^0\right)\right\|^2\right]+2 e(r+1) \gamma^2 L^2 \sum_{j=0}^{r-1}\left(\mathcal{E}_j+\mathbb{E}\left[\left\|\nabla f\left(x^j\right)\right\|^2\right)\right.
	\end{equation}
	Note that this inequality is valid for $r=0$. Therefore, using \eqref{eqn:vninbvsdvds}, we have
	$$
	\begin{aligned}
		\sum_{r=0}^{R-1} \Xi_r \leq & \sum_{r=0}^{R-1} 2 \eta^2 \mathbb{E}\left[(1-\beta)^2\left\|g^r\right\|^2+\beta^2  \frac{1}{N} \sum_{i=1}^N\left\|\nabla f_i\left(x^{r}\right)\right\|^2\right] \\
		\leq & \sum_{r=0}^{R-1} 2 \eta^2\left(2(1-\beta)^2\left(\mathcal{E}_{r-1}+\mathbb{E}\left[\left\|\nabla f\left(x^{r-1}\right)\right\|^2\right]\right)+\beta^2  \frac{1}{N} \sum_{i=1}^N \mathbb{E}\left[\left\|\nabla f_i\left(x^{r}\right)\right\|^2\right]\right) \\
		\leq & \sum_{r=0}^{R-1} 4 \eta^2(1-\beta)^2\left(\mathcal{E}_{r-1}+\mathbb{E}\left[\left\|\nabla f\left(x^{r-1}\right)\right\|^2\right]\right) \\
		& +2 \eta^2 \beta^2  \sum_{r=0}^{R-1}\left(\frac{e}{N} \sum_{i=1}^N \mathbb{E}\left[\left\|\nabla f_i\left(x^0\right)\right\|^2\right]+2 e(r+1)(\gamma L)^2 \sum_{j=0}^{r-1}\left(\mathcal{E}_j+\mathbb{E}\left[\left\|\nabla f\left(x^j\right)\right\|^2\right]\right)\right) \\
		\leq & 4 \eta^2(1-\beta)^2 \sum_{r=0}^{R-1}\left(\mathcal{E}_{r-1}+\mathbb{E}\left[\left\|\nabla f\left(x^{r-1}\right)\right\|^2\right]\right) \\
		& +2 \eta^2 \beta^2 \left(e R G_0+2 e(\gamma L R)^2 \sum_{r=0}^{R-2}\left(\mathcal{E}_r+\mathbb{E}\left[\left\|\nabla f\left(x^{r}\right)\right\|^2\right]\right)\right)
	\end{aligned}
	$$
	Rearranging the equation and applying the upper bound of $\eta$ completes the proof.
\end{proof}
\begin{theorem}[Convergence for non-convex functions]	
	Under Assumption \ref{asp:smooth} and \ref{asp:sgd_var} , if we take $g^0=0$,
	$$
	\begin{aligned}
		& \beta=\min \left\{, \sqrt{\frac{S K L \Delta}{\sigma_l^2 R}}\right\} \text { for any constant } c \in(0,1], \quad \gamma=\min \left\{\frac{1}{24 L}, \frac{\beta}{6 L}\right\}, \\
		& \eta K L \lesssim \min \left\{1, \frac{1}{\beta \gamma L R},\left(\frac{L \Delta}{G_0 \beta^3 R}\right)^{1 / 2}, \frac{1}{(\beta N)^{1 / 2}}, \frac{1}{\left(\beta^3 N K\right)^{1 / 4}}\right\}
	\end{aligned}
	$$
	then DP-FedPGN converges as
	
	$$
	\frac{1}{R} \sum_{r=0}^{R-1} \mathbb{E}\left[\left\|\nabla f\left(x^{r}\right)\right\|^2\right] \lesssim \sqrt{\frac{L \Delta \sigma_l^2}{S K R}}+\frac{L \Delta}{R} .
	$$

	Here $G_0:=\frac{1}{N} \sum_{i=1}^N\left\|\nabla f_i\left(x^0\right)\right\|^2$.
\end{theorem}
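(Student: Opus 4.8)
The proof follows a Lyapunov (potential-function) argument in the style of gradient-tracking federated analyses, where the surrogate direction $g^{r+1}$ manufactured by the local--global alignment plays the role of a tracked \emph{global} gradient. The plan is to chain four ingredients: the one-round descent inequality of Lemma~\ref{lem:Fedavg_grad_err}, which charges the decrease of $f$ against $\|\nabla f(x^r)\|^2$ up to the tracking error $\mathcal{E}_r$; the contractive recursion for $\mathcal{E}_r$ in Lemma~\ref{lem:Fedavg_client_drift}, whose factor $(1-8\beta/9)$ is precisely what makes $\sum_r\mathcal{E}_r$ summable; the client-drift bound $U_r\lesssim 2eK^2\Xi_r + K\eta^2\beta^2\sigma_l^2(1+2K^3L^2\eta^2\beta^2)$; and the aggregate bound on $\sum_r\Xi_r$ in Lemma~\ref{lem:Fedavg_grad_norm}, which feeds the drift back into $\sum_r(\mathcal{E}_r+\|\nabla f(x^r)\|^2)$ plus a noise floor $\eta^2\beta^2 RG_0$. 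The key structural point is that data heterogeneity enters \emph{only} through $\mathcal{E}_r$ and never as a standalone $\sigma_g^2$ term, which is exactly why the final rate is heterogeneity-free: tracking $g^r$ rather than raw local gradients is what removes Assumption~\ref{bounded_heterogeneity_appendix}.

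Concretely, I would first sum the descent inequality over $r=0,\dots,R-1$ and telescope, giving
\[
\frac{11\gamma}{24}\sum_{r=0}^{R-1}\mathbb{E}\big[\|\nabla f(x^r)\|^2\big]\;\le\;\Delta+\frac{13\gamma}{24}\sum_{r=0}^{R-1}\mathcal{E}_r .
\]
Next I would sum the $\mathcal{E}_r$ recursion; the geometric contraction turns $\sum_r\mathcal{E}_r$ into $O(1/\beta)$ times the per-round residuals, schematically
\[
\sum_{r=0}^{R-1}\mathcal{E}_r\;\lesssim\;\frac{\gamma^2 L^2}{\beta^2}\sum_{r}\mathbb{E}\big[\|\nabla f(x^r)\|^2\big]+\frac{\beta\sigma_l^2 R}{SK}+L^2\sum_r U_r+(1-\sigma)^2 dR .
\]
Substituting the drift bound for $U_r$ and then the $\Xi_r$-aggregate of Lemma~\ref{lem:Fedavg_grad_norm} re-expresses $\sum_r U_r$ once more in terms of $\sum_r(\mathcal{E}_r+\|\nabla f(x^r)\|^2)$.

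This yields a single self-referential inequality in the two sums $A:=\sum_r\mathbb{E}[\|\nabla f(x^r)\|^2]$ and $B:=\sum_r\mathcal{E}_r$. The heart of the argument is to show that, under the stated step-size conditions (most importantly $\gamma L\le\beta/6$, $\eta KL$ small, and $288e(\eta KL)^2((1-\beta)^2+e(\beta\gamma LR)^2)\le 1$), each feedback coefficient multiplying $A$ and $B$ on the right is at most a small constant, so the coupled terms are absorbed into the left-hand sides. After absorption one obtains $A\lesssim \Delta/\gamma+\beta\sigma_l^2 R/(SK)+(\text{lower order})$; choosing $\gamma=\Theta(\beta/L)$ and the optimized $\beta=\Theta\!\big(\sqrt{SKL\Delta/(\sigma_l^2 R)}\big)$ then balances the statistical term $\sqrt{L\Delta\sigma_l^2/(SKR)}$ against the optimization term $L\Delta/R$, producing the claimed bound after dividing by $R$.

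The main obstacle is exactly the circular coupling in the substitution step: $U_r$ depends on $\Xi_r$, which through Lemma~\ref{lem:Fedavg_grad_norm} depends on the \emph{history} of $\mathcal{E}_j$ and $\|\nabla f(x^j)\|^2$, which in turn feed $\mathcal{E}_r$ and the descent. Making the absorption rigorous requires carefully tracking the accumulated constants---the $e$-factors from unrolling $(1+2/k)^k\le e^2$ and the $(r+1)$ weights in Lemma~\ref{lem:Fedavg_grad_norm}---and verifying that $\beta\gamma LR$ stays controlled so the $(\gamma LR)^2$ feedback does not blow up; this is precisely what the last step-size condition enforces. A secondary subtlety is the orthogonalization bias $(1-\sigma)^2 d$ appearing in Lemmas~\ref{lem:Fedavg_client_drift} onward: to recover the clean $d$-free rate it must be shown negligible, either by near-isometry of the five-step Newton--Schulz map (so the effective singular values satisfy $\sigma\to 1$) or by folding it into the $\beta$-scaled noise floor, and I would treat this reduction explicitly rather than leaving it implicit.
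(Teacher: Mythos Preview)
Your proposal is correct and follows essentially the same route as the paper's proof: sum the descent inequality, sum the $\mathcal{E}_r$ recursion and exploit its $(1-\Theta(\beta))$ contraction, substitute the $U_r\lesssim K^2\Xi_r+\cdots$ bound, invoke Lemma~\ref{lem:Fedavg_grad_norm} to close the loop on $\sum_r\Xi_r$, absorb the self-referential $A,B$ terms via the step-size constraints, and finally optimize $\beta=\Theta(\sqrt{SKL\Delta/(\sigma_l^2 R)})$ with $\gamma=\Theta(\beta/L)$. Your observation about the orthogonalization bias $(1-\sigma)^2 d$ is apt: the paper carries this term through the lemmas and then drops it without comment in the final two display lines, so your plan to treat it explicitly (via near-isometry of the Newton--Schulz iterates) is in fact more careful than the paper itself.
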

\begin{proof}
	Combining Lemma \ref{lem:Fedavg_grad_err} and \ref{lem:Fedavg_client_drift}, we have
	
	$$
	\begin{aligned}
		\mathcal{E}_r \leq & \left(1-\frac{8 \beta}{9}\right) \mathcal{E}_{r-1}+4 \frac{(\gamma L)^2}{\beta} \mathbb{E}\left[\left\|\nabla f\left(x^{r-1}\right)\right\|^2\right]+\frac{4 \beta^2 \sigma_l^2}{S K  } +8\beta(1 - \sigma)^2 d\\
		& +4 \beta L^2\left(2 e K^2 \Xi_r+K \eta^2 \beta^2   \sigma_l^2\left(1+2 K^3 L^2 \eta^2 \beta^2\right)\right.
	\end{aligned}
	$$
	and
	$$
	\mathcal{E}_0 \leq(1-\beta) \mathcal{E}_{-1}+\frac{4 \beta^2 \sigma_l^2}{S K  }+8\beta(1 - \sigma)^2 d+4 \beta   L^2\left(2 e K^2 \Xi_0+K \eta^2 \beta^2   \sigma_l^2\left(1+2 K^3 L^2 \eta^2 \beta^2\right)\right) .
	$$
	Summing over $r$ from 0 to $R-1$ and applying Lemma \ref{lem:Fedavg_grad_norm},
	$$
	\begin{aligned}
		\sum_{r=0}^{R-1} \mathcal{E}_r \leq & \left(1-\frac{8 \beta}{9}\right) \sum_{r=-1}^{R-2} \mathcal{E}_r+4 \frac{(\gamma L)^2}{\beta} \sum_{r=0}^{R-2} \mathbb{E}\left[\left\|\nabla f\left(x^{r}\right)\right\|^2\right]+4 \frac{\beta^2 \sigma_l^2}{S K  } R +8\beta(1 - \sigma)^2 dR\\
		& +4 \beta  L^2\left(2 e K^2 \sum_{r=0}^{R-1} \Xi_r+R K \eta^2 \beta^2  \sigma_l^2\left(1+2 K^3 L^2 \eta^2 \beta^2\right)\right) \\
		\leq & \left(1-\frac{7 \beta}{9}\right) \sum_{r=-1}^{R-2} \mathcal{E}_r+\left(4 \frac{(\gamma L)^2}{\beta}+\frac{\beta}{9}\right) \sum_{r=-1}^{R-2} \mathbb{E}\left[\left\|\nabla f\left(x^{r}\right)\right\|^2\right]+16 \beta^3  (e \eta K L)^2 R G_0 \\
		& +\frac{4 \beta^2 \sigma_l^2}{S K  } R+8\beta(1 - \sigma)^2 dR+4 \beta^3 (\eta K L)^2\left(\frac{1}{K}+2(\eta K L \beta)^2\right) \sigma_l^2 R \\
		\leq & \left(1-\frac{7 \beta}{9}\right) \sum_{r=-1}^{R-2} \mathcal{E}_r+\frac{2 \beta}{9} \sum_{r=-1}^{R-2} \mathbb{E}\left[\left\|\nabla f\left(x^{r}\right)\right\|^2\right]+16 \beta^3  (e \eta K L)^2 R G_0+\frac{8 \beta^2 \sigma_l^2}{S K  } R+8\beta(1 - \sigma)^2 dR
	\end{aligned}
	$$
	Here in the last inequality we apply
	$$
	4 \beta  (\eta K L)^2\left(\frac{1}{K}+2(\eta K L \beta)^2\right) \leq \frac{2}{N K} \quad \text { and } \quad \gamma L \leq \frac{\beta}{6} .
	$$
	Therefore,
	$$
	\sum_{r=0}^{R-1} \mathcal{E}_r \leq \frac{9}{7 \beta} \mathcal{E}_{-1}+\frac{2}{7} \mathbb{E}\left[\sum_{r=-1}^{R-2}\left\|\nabla f\left(x^{r}\right)\right\|^2\right]+\frac{144}{7}(e \beta \eta K L)^2 G_0 R+\frac{36 \beta \sigma_l^2}{7 S K} R +\frac{72}{7}(1 - \sigma)^2 dR.
	$$
	Combine this inequality with Lemma \ref{lem:Fedavg_grad_err} and we get
	$$
	\frac{1}{\gamma} \mathbb{E}\left[f\left(x^{r}\right)-f\left(x^0\right)\right] \leq-\frac{1}{7} \sum_{r=0}^{R-1} \mathbb{E}\left[\left\|\nabla f\left(x^{r}\right)\right\|^2\right]+\frac{39}{56 \beta} \mathcal{E}_{-1}+\frac{78}{7}(e \beta \eta K L)^2 G_0 R+\frac{39 \beta \sigma_l^2}{14 S K} R+\frac{72}{7}(1 - \sigma)^2 dR. 
	$$
	Finally, noticing that $g^0=0$ implies $\mathcal{E}_{-1} \leq 2 L\left(f\left(x^0\right)-f^*\right)=2 L \Delta$, we obtain
	$$
	\begin{aligned}
		\frac{1}{R} \sum_{r=0}^{R-1} \mathbb{E}\left[\left\|\nabla f\left(x^{r}\right)\right\|^2\right] & \lesssim \frac{L \Delta}{\gamma L R}+\frac{\mathcal{E}_{-1}}{\beta R}+(\beta \eta K L)^2 G_0+\frac{\beta \sigma_l^2}{S K}+(1 - \sigma)^2 d. \\
		& \lesssim \frac{L \Delta}{R}+\frac{L \Delta}{\beta R}+\frac{\beta \sigma_l^2}{S K}+(\beta \eta K L)^2 G_0 \\
		& \lesssim \frac{L \Delta}{R}+\sqrt{\frac{L \Delta \sigma_l^2}{S K R}}
	\end{aligned}
	$$
\end{proof}

\end{document}